\documentclass{article}

\usepackage[preprint]{neurips_2026}

\usepackage{amssymb,amsmath,amsthm,amsfonts}
\usepackage{wrapfig}
\usepackage{placeins}
\usepackage{needspace}

\usepackage{microtype}
\usepackage{graphicx}
\usepackage{subcaption}
\usepackage{booktabs} %

\newcommand{\appref}[1]{\hyperref[#1]{Appendix~\ref*{#1}}}
\newcommand{\LLL}{\mathcal{L}}

\usepackage{amsmath,amssymb,amsthm}
\usepackage{bm}
\usepackage{booktabs}
\usepackage{enumitem}

\usepackage{mathtools}

\theoremstyle{plain}
\newtheorem{theorem}{Theorem}[section]

\newtheorem{lemma}[theorem]{Lemma}

\theoremstyle{definition}

\theoremstyle{remark}

\usepackage[textsize=tiny]{todonotes}
\usepackage{siunitx}
\usepackage{ctable}

\usepackage{color, colortbl}

\usepackage{multicol}

\usepackage{xcolor}
\usepackage{makecell}
\usepackage{colortbl}

\definecolor{mydarkblue}{rgb}{0,0.08,0.45}
\usepackage[colorlinks=true,
    linkcolor=mydarkblue,
    citecolor=mydarkblue,
    filecolor=mydarkblue,
    urlcolor=mydarkblue,
    pdfview=FitH]{hyperref}

\renewcommand{\sectionautorefname}{Section}
\renewcommand{\subsectionautorefname}{Section}

\usepackage{bbm}
\usepackage[utf8]{inputenc} %
\usepackage[T1]{fontenc}    %
\usepackage{hyperref}       %
\usepackage{url}            %
\usepackage{booktabs}       %
\usepackage{amsfonts}       %
\usepackage{nicefrac}       %
\usepackage{microtype}      %
\usepackage{xcolor}         %
\usepackage{tcolorbox}

\usepackage{minitoc}

\title{LeRoPE: Learnable RoPE Frequencies Improve Language Modeling}

\author{%
  Petros Karypis\thanks{Primary authors. Correspondence: pkarypis@ucsd.edu} \\
  UC San Diego \\
  \And 
  Sean O'Brien$^*$ \\
  UC San Diego \\
  \And
  Shreyas Kadekodi \\
  UC San Diego \\
  \And
  Rui Zhu \\
  Independent Researcher \\
  \AND
  Julian McAuley \\
  UC San Diego \\
}

\begin{document}
\doparttoc

\maketitle
\begin{abstract}
Rotary Positional Encodings (RoPE) are currently the most popular positional encodings used in modern language models. 
RoPE rotates two-dimensional chunks of query and key vectors, operating as a function of their relative positional offset.
The position-wise rates of rotation in RoPE typically follow a geometric sequence specified by a fixed base-frequency hyperparameter.
Prior work has improved performance by either increasing this parameter to slow rotation or by applying RoPE to only a subset of QK dimensions.
In this work we modify RoPE by learning a scalar per frequency, treating frequencies as learnable parameters rather than hyperparameters.
We validate \textbf{Le}arned \textbf{RoPE} by training a ladder of language models from scratch, ranging from 52M to 2.5B parameters. We observe and analyze the emergence of a high-norm, positional LeRoPE band. LeRoPE consistently outperforms RoPE and partial RoPE across all scales, with RoPE requiring 3.4\% more compute (FLOPs) to match LeRoPE at the largest scale.
\end{abstract}
\faketableofcontents

\section{Introduction}
\label{sec:intro}

Positional encodings are typically included in Transformer-based models \citep{Vaswani2017AttentionIA} because the self-attention operation on its own does not directly carry positional signal. %
Rotary Positional Encodings (RoPE)~\citep{Su2021RoFormerET} are used by a number of popular language models~\citep{Touvron2023LLaMAOA,
Chowdhery2022PaLMSL, Jiang2023Mistral7, Kamath2025Gemma3T}.
RoPE splits the
query and key vectors into two-dimensional pairs, or \emph{frequency bands},
and rotates each pair by an angle proportional to token position. 
The effect of these rotations on the query--key inner product depends only on the relative offset between tokens. 
Each band rotates at its own fixed rate, following a geometric sequence set by a base hyperparameter: high-frequency
bands complete many rotations within a context, while the lowest-frequency bands rotate minimally. 

The choice of these frequencies matters. Recent work finds that the slowest bands carry semantic rather than positional information~\citep{Barbero2024RoundAR}. 
This positional invariance allows content matches to hold at any separation, making it beneficial to remove rotation from a subset of bands entirely, or to adjust the base, which sets how many bands rotate appreciably within the context
window~\citep{Barbero2024RoundAR,Men2024BaseOR,oka2026frequency}.

Motivated by the above, and noting that the RoPE operation is differentiable with respect to its frequency, we propose learning the frequencies.
Learned RoPE (LeRoPE) adds one learned scalar per frequency band, 32 parameters in total for each of our models.
Training on a ladder of models from 52M to 2.5B parameters, LeRoPE outperforms RoPE at every scale. At the largest scale, RoPE requires $3.4\%$ more compute to match the performance of LeRoPE (\autoref{fig:lerope_cm}).

Analyzing models trained with LeRoPE, we observe a consistent frequency profile across scales and random seeds, as well as the emergence of a single band with a large, strongly positional contribution to the attention logits, which we term the ``dominant positional band.''

\begin{figure}[tb]
  \centering
  \includegraphics[width=\linewidth]{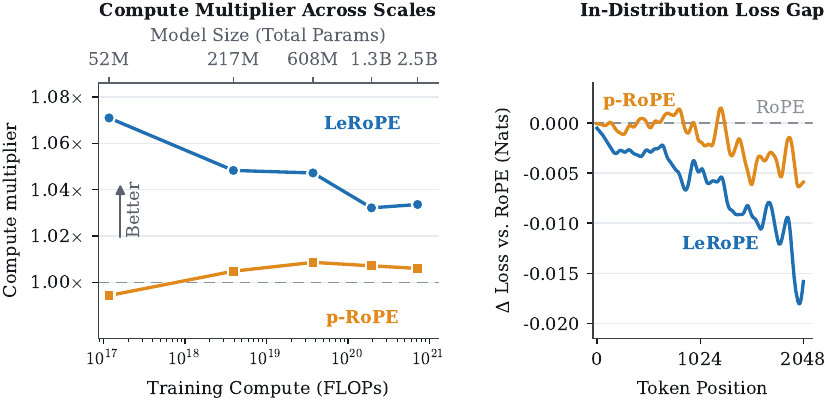}
  \caption{\textbf{a}: Compute multipliers vs.\ RoPE. The compute multiplier is the factor by which the baseline's training compute must increase to match the variant's loss. The multiplier declines from ${\sim}1.07$ at the smallest scale and plateaus near $1.03$--$1.04$, a small but persistent advantage. This means RoPE needs $3.4\%$ more compute to reach the loss LeRoPE achieves at 2.5B.
  All runs use a single seed; learning rates are swept per method for models below 1.34B (\S\ref{ssec:ladder}). \textbf{b}: Average loss gap per token position of $p$-RoPE and LeRoPE vs.\ RoPE, in-distribution (smoothed via Gaussian kernel, $\sigma = 30$).}
  \label{fig:lerope_cm}
\end{figure}

\section{Preliminaries}
\label{sec:prelims}

\subsection{Positional Embeddings}
\label{ssec:related_works}

To break the positional symmetry of the attention mechanism, \cite{Vaswani2017AttentionIA} add fixed sinusoidal positional embeddings directly to the input residual stream. Later models instead incorporated learned absolute positional embeddings ~\citep{Radford2019LanguageMA,Devlin2019BERTPO,Liu2019RoBERTaAR,Zhang2022OPTOP,Dosovitskiy2020AnII}.

\cite{shaw-etal-2018-self} proposed \textit{relative} position representations, which depend only on the relative offset between tokens; a simplified, scalar version of learned relative offsets was adopted by T5 \citep{Raffel2019ExploringTL}.
Rotary positional embeddings, proposed by \cite{Su2021RoFormerET}, also depend only on relative position, and have become the standard choice for modern language models.

Since RoPE, a number of positional encodings have been proposed. 
ALiBi \citep{Press2021TrainST} improved out-of-distribution length extrapolation with fixed relative positional biases.
CoPE provides the model with the capacity to produce relative positional embeddings from learned, content-dependent features \citep{Golovneva2024ContextualPE}; PaTH also proposes a data-dependent relative transform based on accumulating Householder matrices \citep{Yang2025PaTHAP}.

Some methods generalize to a broader class of functions, of which RoPE is a member. For instance, LieRE learns skew-symmetric matrices that can rotate along planes other than the standard RoPE plane \citep{Ostmeier2024LieRELR}.
\cite{oka2026probing} generalize RoPE to a broader class of wavelet-function-based positional encodings, finding improved performance from Ricker-based wavelets.

\subsection{RoPE}
\label{ssec:rope}

\cite{Su2021RoFormerET} propose \textbf{Ro}tary \textbf{P}osition \textbf{E}mbedding (\textbf{RoPE}), a relative positional embedding method that forms the basis of our study. RoPE rotates key and query vectors according to their position, modifying their inner product according only to their relative offset.

Concretely, consider a single attention head with per-head dimension $d_h$. The dimensions of this head can be chunked into $d_h/2$  frequency pairs, indexed by $m = 0, \ldots, d_h/2-1$. Standard RoPE assigns each pair a frequency from a geometric series:
\begin{equation}\label{eq:theta}
    \theta_m = b^{-2m/d_h}, \quad m = 0, \ldots, d_h/2-1,
\end{equation}
where $b > 1$ is the base (typically $b = 10{,}000$). Following the notation of~\cite{Barbero2024RoundAR} we denote $\rho(\theta_m)$ as the matrix form of the rotational transformation of angle $\theta_m$:
\begin{equation}
    \rho(\theta_m) = 
    \begin{bmatrix}
    \cos(\theta_m) & -\sin(\theta_m) \\
    \sin(\theta_m) & \cos(\theta_m)
    \end{bmatrix}
\end{equation}

\newcommand{\Rs}{\textbf{R}_s}
\newcommand{\Rt}{\textbf{R}_t}
RoPE constructs a block-diagonal matrix $\Rs = {\oplus}_{m=0...d_h /2-1} \rho (\theta^{(m)})^s$ where $s$ is the position of a given token.
It then applies $\Rs$ to each key and query vector; that is $\tilde{q}_s = \Rs q_s$ and $\tilde{k}_s = \Rs k_s$.

The rotation-matrix properties $\rho(\theta)\rho(\theta') = \rho(\theta + \theta')$ as well as $\rho(\theta)^\intercal = \rho(-\theta)$, in conjunction with the block-diagonal structure of $\Rs$ gives the relation $\Rs^\intercal  \Rt = \textbf{R}_{t - s}$. The dot product of a RoPE-rotated query-key pair $\tilde{q}_s, \tilde{k}_t$ decomposes into:

\begin{align}\label{eq:rope_band_decomp}
    \tilde{q}_s^\intercal \tilde{k}_t^{\vphantom{\intercal}} = (\Rs q_s)^\intercal (\Rt k_t) = q_s^\intercal \textbf{R}_{t-s}k_t = \sum_{m=0...d_h/2-1} \left( q_s^{(m)} \right)^T \rho(\theta_m)^{t-s} k_t^{(m)} 
\end{align}

where $q_s^{(m)}$ is the 2 dimensional chunk (or band) of the query vector $q_s^{[2m,2m+1]}$ (the same notation holds for the key vectors).  Thus, the RoPE operation depends only on the relative offset $t - s $ and not on the absolute positions $s$ and $t$.

\subsection{Frequency Bands and Partial RoPE}
\label{ssec:prope}

Each index $m$ in Eq.~\eqref{eq:theta} defines a \emph{frequency band}: a two-dimensional sub-vector 
rotated at angular frequency $\theta_m$, completing a full
period every $2\pi/\theta_m$ tokens. Because the frequencies are geometrically
spaced, these periods span several orders of magnitude, and the slowest bands
typically do not complete a single period within the training context.

Trained models do not use this spectrum uniformly.
\citet{Barbero2024RoundAR} show that RoPE models concentrate query and key
norm in the lowest-frequency bands and argue that these bands, which rotate
negligibly over typical offsets, transport semantic rather than positional
information. Since even a slow band eventually drifts over long ranges, they
propose Partial RoPE ($p$-RoPE), which removes rotation from the slowest bands
entirely:
\begin{equation}\label{eq:prope}
    \theta_m =
    \begin{cases}
        b^{-2m/d_h} & m < p \cdot d_h/2, \\
        0           & \text{otherwise},
    \end{cases}
\end{equation}
where $p$ is the fraction of bands kept. Setting $p{=}1$ recovers RoPE, while
$p{=}0$ recovers NoPE — omitting positional encoding altogether — which itself
improves length generalization \citep{Kazemnejad2023TheIO}. Adjusting
the base $b$ acts on the same lever continuously: raising it slows every band
rather than zeroing the slowest, and is an effective tool for context
extension \citep{Liu2023ScalingLO, Men2024BaseOR}, adopted at scale in recent
open models \citep{Kamath2025Gemma3T}.

\citet{oka2026frequency} expand this analysis and identify a contiguous \emph{high-norm
band} of RoPE dimensions that emerges early in pre-training, with its location
determined jointly by the base $b$ and the training length. Lower frequency bands can be replaced
with NoPE at inference at little cost, confirming that the slowest frequencies
are only weakly used for position. All of these methods, however,
fix the band allocation by hand before training.

\section{LeRoPE}

\subsection{Extending  RoPE to Learned Frequencies}
\label{ssec:lerope}

\begin{tcolorbox}[boxsep=0mm,left=2.5mm,right=2.5mm]
LeRoPE augments RoPE with one learned scalar $\alpha_m$ per frequency band,
rescaling each frequency:
\begin{equation}\label{eq:lerope}
    \hat\theta^{(m)} = e^{\alpha_m}\theta^{(m)},
    \qquad
    \mathbf{R}^{\mathrm{LeRoPE}}_s
    = \bigoplus_{m=0}^{d_h/2-1} \rho\big(\hat\theta^{(m)}\big)^{s}.
\end{equation}
\end{tcolorbox}

We parametrize frequency scalars in logarithmic space, so that steps in $\alpha_m$ make multiplicative changes to $\hat\theta^{(m)}$. 
This yields better conditioning when scaling down frequencies and avoids undesirable gradient dynamics near frequencies of zero. We initialize $\alpha_m = 0$, making our method equivalent to RoPE on initialization. Additionally, we omit weight decay on LeRoPE parameters, and apply gradient clipping to them independently of the rest of the model's parameters.

All layers and heads share the same set of learned frequencies; thus, with $d_h$ as the head dimension, LeRoPE adds $d_h/2$ total parameters to the model.
The parametrization also admits finer granularity (e.g., per-layer or per-head), but we find that sharing $\alpha_m$ values across all layers and heads performs consistently best.

\subsection{Gradient of a Frequency}
\label{ssec:frequency_gradient}

Each learned frequency receives its
gradient through the per-band contributions of
Eq.~\eqref{eq:rope_band_decomp}. Writing the band-$m$ chunks in polar form
and differentiating,

\begin{align}
\label{eq:freq_grad}
  \frac{\partial \mathcal{L}}{\partial \hat\theta^{(m)}}
  &= -\sum_{s}\sum_{t < s} (s-t)\,
    \big\|q_s^{(m)}\big\|\,\big\|k_t^{(m)}\big\|\,
    \sin\!\big(\phi^{(m)}_{st} + (s-t)\,\hat\theta^{(m)}\big)\,
    \frac{\partial \mathcal{L}}{\partial z_{st}}
\end{align}

where $\phi^{(m)}_{st}$ is the angle from the key to the query chunk and $z_{st}$ the
attention logit.

We can express the gradient update as the derivative of a surrogate loss $P(\theta)$. Let $\alpha_{st}$ be the post-softmax attention weight from query $q_s$ to key $k_t$; let $v_t$ be the value vector at position $t$ and $o_s = \sum_{t < s} \alpha_{st} v_t$ be the attention-aggregated output vector, before projection for simplicity. Further, let $g_s \triangleq \partial \LLL / \partial o_s$ be the downstream gradient and $L$ the maximum context length.

With these quantities, we can define a spectral function of offset $C_m(\delta)$. The direction of each term is dictated by the relative QK angle $\phi_{st}^{(m)}$, and its scale is the product of QK norms, post-softmax probability, and the downstream gradient term.

\begin{align*}
C_m(\delta) &\triangleq \sum_{\substack{s,t \\s-t=\delta}} \alpha_{st}   \|q_s^{(m)} \| \|k_t^{(m)}\|
    e^{i\phi_{st}^{(m)}}
    (v_t - o_s)^T g_s = R_m (\delta) e^{i\Phi_m(\delta)} \\
    P_m(\theta) &= \text{Re}\left(
    \sum_{\delta=1}^{L-1} C_m(\delta) e^{i \delta \theta}
    \right) = \sum_{\delta=1}^{L-1} R_m(\delta) \cos \left(
    \Phi_m(\delta) + \delta \theta
    \right)
\end{align*}

\begin{lemma}
\label{lemma:dtft_gradient}
Define $P_m(\theta)$ as above: the real component of the discrete-time Fourier Transform of the offset-aggregated gradient signal $C_m(\delta)$ with frequency $\theta$.

Then $\frac{\partial \LLL}{\partial \theta^{(m)}} = P_m'\left(\theta^{(m)}\right)$.
\end{lemma}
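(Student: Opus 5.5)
The plan is a direct chain-rule computation. We differentiate the logits band by band, push the gradient through the softmax, and then regroup the double sum over $(s,t)$ by offset $\delta = s-t$ to recognize $P_m'$.

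\textbf{Step 1 (per-band logit).} By Eq.~\eqref{eq:rope_band_decomp} the logit is $z_{st} = \sum_m (q_s^{(m)})^\intercal \rho(\theta^{(m)})^{t-s} k_t^{(m)}$; any $1/\sqrt{d_h}$ temperature is absorbed into the norms. Write $q_s^{(m)}$ and $k_t^{(m)}$ in polar form and let $\phi^{(m)}_{st}$ be the angle of the query chunk minus that of the key chunk. Then for $t<s$ the band-$m$ term equals
\[
\|q_s^{(m)}\|\,\|k_t^{(m)}\|\cos\!\big(\phi^{(m)}_{st}+(s-t)\theta^{(m)}\big).
\]
Since only this term depends on $\theta^{(m)}$,
\[
\partial z_{st}/\partial\theta^{(m)} = -(s-t)\,\|q_s^{(m)}\|\,\|k_t^{(m)}\|\sin\!\big(\phi^{(m)}_{st}+(s-t)\theta^{(m)}\big).
\]
This recovers Eq.~\eqref{eq:freq_grad}. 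The sign convention for $\phi$ must be fixed carefully here, since $\rho^{t-s}$ with $t<s$ rotates backwards.

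\textbf{Step 2 (softmax backprop).} From $o_s=\sum_t \alpha_{st}v_t$ we have $\partial\LLL/\partial\alpha_{st} = v_t^\intercal g_s$. The softmax Jacobian $\partial\alpha_{st'}/\partial z_{st} = \alpha_{st'}(\mathbb{1}[t=t']-\alpha_{st})$ then gives
\[
\partial\LLL/\partial z_{st} = \alpha_{st}\Big(v_t - \sum_{t'}\alpha_{st'}v_{t'}\Big)^\intercal g_s = \alpha_{st}(v_t-o_s)^\intercal g_s .
\]
The positional transform acts only on queries and keys, so values and $g_s$ do not depend on $\theta^{(m)}$.

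\textbf{Step 3 (regrouping).} Combining the two steps,
\[
\partial\LLL/\partial\theta^{(m)} = -\sum_{\delta}\delta\sum_{s-t=\delta}\alpha_{st}\|q_s^{(m)}\|\|k_t^{(m)}\|(v_t-o_s)^\intercal g_s\,\sin\!\big(\phi^{(m)}_{st}+\delta\theta^{(m)}\big).
\]
Use $-\sin(x+\delta\theta) = \mathrm{Re}\big(i e^{i(x+\delta\theta)}\big)$ and the $\mathbb{R}$-linearity of $\mathrm{Re}$ (the weights $\alpha_{st}\|q\|\|k\|(v_t-o_s)^\intercal g_s$ are real). The inner sum becomes $\mathrm{Re}\big(i\delta\, C_m(\delta) e^{i\delta\theta^{(m)}}\big)$. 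This is exactly the $\delta$-th term of $\frac{d}{d\theta}\mathrm{Re}\sum_\delta C_m(\delta)e^{i\delta\theta}$ at $\theta=\theta^{(m)}$. Equivalently, using $C_m=R_m e^{i\Phi_m}$, we get $P_m'(\theta) = -\sum_\delta \delta R_m(\delta)\sin(\Phi_m(\delta)+\delta\theta)$.

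\textbf{Main obstacle.} The delicate point is interpretive rather than computational. The quantities $C_m(\delta)$ (through $\alpha_{st}$, $o_s$, $g_s$ and the other bands) themselves depend on $\theta^{(m)}$. So $P_m$ must be read as a surrogate in which $C_m$ is frozen at the current parameters, and the identity holds only for its derivative evaluated at $\theta=\theta^{(m)}$. I would state this explicitly, then check the sign bookkeeping for $\phi^{(m)}_{st}$ against Eq.~\eqref{eq:freq_grad}.
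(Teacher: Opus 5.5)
Your proposal is correct and follows essentially the same route as the paper: substitute the softmax-backprop identity $\partial\LLL/\partial z_{st}=\alpha_{st}(v_t-o_s)^\intercal g_s$ (the paper's Lemma~\ref{lem:downstream_gradient}) into Eq.~\eqref{eq:freq_grad}, regroup the double sum by offset $\delta=s-t$, and recognize $\delta\, C_m(\delta)e^{i\delta\theta}$ as a $\theta$-derivative of $C_m(\delta)e^{i\delta\theta}$. The differences are only cosmetic. The paper takes Eq.~\eqref{eq:freq_grad} as given, writes the sine as $-\mathrm{Im}$, and uses $\delta C e^{i\delta\theta}=-i\,\frac{d}{d\theta}[C e^{i\delta\theta}]$, whereas you derive Eq.~\eqref{eq:freq_grad} from the polar form and use $-\sin y=\mathrm{Re}(ie^{iy})$ directly; your point that $C_m$ must be held frozen is left implicit in the paper's surrogate-loss reading.
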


\begin{proof}
    See Lemma~\ref{lem:dtft_gradient}.
\end{proof}

This provides one perspective on how $\theta^{(m)}$ is updated.
Each offset $\delta$ is mapped to a complex phasor $C_m(\delta)$, which aggregates the downstream gradients weighted by QK norms, un-rotated QK relative angles, and the post-softmax attention weights $\alpha_{st}$.

Then, $\theta^{(m)}$ is updated towards the local minimum of the surrogate function $P_m(\theta)$, which is the real component of the discrete-time Fourier Transform of the series, evaluated at frequency $\theta$.
The effect of each relative offset on this gradient can be characterized by visualizing the per-offset phasors $C_m(\delta)$: we plot these for each band in \appref{sec:phasor_plots} to illustrate the per-offset geometries; the phasors largely cancel about the origin rather than cohering.

\section{Experimental Setup}
\label{sec:exp_setup}

\begin{table}[tb]
  \centering
  \caption{Validation perplexity on C4 across our scaling ladder (lower is better; best per column in bold). LeRoPE improves
  over the RoPE baseline at every scale; $p$-RoPE yields a smaller, consistent
  improvement at all but the smallest scale.}
  \label{tab:ppl}
  \begin{tabular}{lccccc}
    \toprule
    Method & 52M & 217M & 608M & 1.34B & 2.52B \\
    \midrule
    RoPE   & 30.2450 & 18.2377 & 13.8795 & 11.5747 & 10.1720 \\
    $p$-RoPE & 30.2695 & 18.2269 & 13.8663 & 11.5665 & 10.1659 \\
    \midrule
    LeRoPE & \textbf{29.9479} & \textbf{18.1335} & \textbf{13.8093} & \textbf{11.5384} & \textbf{10.1387} \\
    \bottomrule
  \end{tabular}
\end{table}

\subsection{Scaling Ladder}
\label{ssec:ladder}

To evaluate our proposed method we train a family of decoder-only transformers ranging in size from $52\text{M}$ to $2.5\text{B}$ parameters under a compute-optimal regime. Following previous works~\citep{Ferbach2026LogarithmictimeSF, Charles2025CommunicationEfficientLM} we co-scale layers and head count keeping the head dimension fixed at 64 so all models have the same number of learned frequency scales. We adopt the Chinchilla token count of $D = 20N$ where $N$ is the total number of model parameters. We use C4 \citep{Raffel2019ExploringTL} as our pre-training corpus and a sequence length of 2048 with intra-document masking. During evaluation we do not pack documents. An exhaustive overview of our architecture, hyper-parameters and differences from existing works can be found in Appendix~\ref{ssec:arch}.

We compare against RoPE with the standard base $b=10{,}000$ and $p$-RoPE with $p=0.75$ following~\citet{Barbero2024RoundAR}. For every model below 1.34B parameters we sweep the learning rate over a geometric grid, $\eta \in \{2^{-i/2} : i \in \mathbb{Z}\}$, until the best value is bracketed by a tested value on either side. We run this sweep independently for each of the three positional-encoding methods, and find that all three share the same optimal learning rate at every scale; comparisons at a given scale therefore use identical hyperparameters. For the two largest models we extrapolate the best grid index from the smaller scales.

\subsection{Compute Multipliers}
\label{ssec:compute_multipliers}

To evaluate the benefit of a proposed architecture change, an important quantity to measure is the amount of compute saved relative to the baseline in order to reach the same performance. We report this through compute multipliers: a compute multiplier of $1.x$ for LeRoPE corresponds to needing $x\%$ more compute to reach the same loss with RoPE.

We measure the training compute of each model as $C = 6 ND$ FLOPs, where $N$ is its number of non-embedding parameters. 
For each method we form a loss--compute curve by piecewise-linear interpolation of the
final validation losses of its compute-optimal models in $(\log C, L)$ space and
extrapolating beyond the smallest and largest scales with the slope of the
nearest segment. Writing $L^{\mathrm{RoPE}}$ and $L^{\mathrm{var}}$ for the
baseline and variant curves, the compute multiplier of a variant trained at
compute $C^{\mathrm{var}}$ is
\begin{equation}
  \mathrm{CM} \;=\; \frac{C^{\mathrm{RoPE}}}{C^{\mathrm{var}}},
  \qquad\text{where}\qquad
  L^{\mathrm{RoPE}}\!\big(C^{\mathrm{RoPE}}\big)
  = L^{\mathrm{var}}\!\big(C^{\mathrm{var}}\big),
  \label{eq:cm}
\end{equation}
i.e.\ the factor by which RoPE's compute must increase to reach the variant's
loss, obtained by inverting the interpolated $L^{\mathrm{RoPE}}$.

Our scaling ladder is sparser than that of~\cite{Ferbach2026LogarithmictimeSF},
with adjacent scales differing by a decade or more of compute. Linear interpolation
across these gaps overestimates the multiplier for the convex power-law loss
curve, so we invert \autoref{eq:cm} in $\log C$ rather than $C$ to reduce the overestimation.
We include further discussion on this choice as well as a comparison with the more traditional \textit{efficiency gain} fitted power-law inversion method~\citep{mai-thinking} in \appref{aapp:cm_calc}.

\section{Results}
\label{sec:results}

\begin{table}[t]
  \centering
  \caption{Validation perplexity and percentage of LeRoPE gain captured by each method on a 217M-parameter model. Fixed LeRoPE uses frozen frequencies learned by LeRoPE on a separate run at the same scale.}
  \label{tab:ppl_warmstart}
  \begin{tabular}{lcc}
    \toprule
    Method & Perplexity & \% Gain Captured \\
    \midrule
    RoPE   & 18.2377 & 0\% \\
    $p$-RoPE & 18.2269 & 10.4\% \\
    Fixed LeRoPE & 18.1714 & 63.6\% \\
    LeRoPE & \textbf{18.1335} & 100\% \\
    \bottomrule
  \end{tabular}
\end{table}

\needspace{15\baselineskip}
\begin{wrapfigure}[16]{r}{0.4\textwidth}
  \centering
  \includegraphics[width=0.38\textwidth]{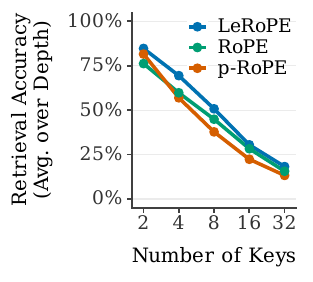}
  \caption{Multi-key Needle-in-a-Haystack accuracy as a function of the number of distractor keys evaluated on sequences in-distribution.}
  \label{fig:in_dist_niah}
  \vspace{-1em}
\end{wrapfigure}

\subsection{Compute Multipliers}

We report the compute multipliers for LeRoPE and $p$-RoPE in \autoref{fig:lerope_cm}(a), and the corresponding validation perplexity numbers in \autoref{tab:ppl}. Across all scales LeRoPE realizes a small but consistent gain, corresponding to RoPE requiring $3.4\%$ more compute to match it at the largest scale. We also include the loss delta relative to RoPE as a function of position in \autoref{fig:lerope_cm} (b). LeRoPE's gain over RoPE grows with context length, suggesting it makes better use of longer context.

To bound run-to-run variability, we repeat the 217M RoPE and LeRoPE runs with three random seeds. The method gap of $0.0066$ nats is $3$ to $7$ times the per-method seed standard deviation; LeRoPE improves on RoPE under every seed, with the worst LeRoPE run outperforming the best RoPE run by $0.0034$ nats. The exact numbers as well as additional ablations, including per-head per-layer experiments, can be found in \autoref{app:ablations}.

\subsection{In-Distribution Needle-in-a-Haystack}
\label{ssec:niah}

Needle in a Haystack (NIAH) is a standard probe of context utilization and retrieval: given a key, the model must retrieve its associated value from a context of filler text and, optionally, distractor key--value pairs. Examples are generated synthetically; we vary the depth of the gold pair (the ``needle'') within the context, using filler text from the original NIAH repository.\footnote{\url{https://github.com/gkamradt/needle-in-a-haystack}}

We evaluate the 2.5B models with 1 to 31 distractor pairs across all three methods placing the needle at depths $\{0, 0.1, \ldots, 1.0\} \times L_{\text{train}}$.
\autoref{fig:in_dist_niah} plots retrieval accuracy against distractor count, averaged over depths: LeRoPE outperforms RoPE and $p$-RoPE at every shared count, and continues to outperform RoPE at 31 distractor pairs, the hardest setting evaluated.
All evaluations here stay within $L_{\text{train}}$, as this section targets in-distribution behavior; we report only the multi-key variant because
single-needle retrieval is saturated for all models in-distribution, and we defer length extrapolation to \autoref{sec:extrap}.

\begin{tcolorbox}[boxsep=0mm,left=2.5mm,right=2.5mm]
\textbf{Section Summary:} {\em LeRoPE is $3.4\%$ more compute efficient than RoPE, leading to a final model with better in-distribution performance for a fixed compute budget.}   
\end{tcolorbox}

\section{Analysis: Learned Frequency Patterns}
\label{sec:learned_freqs}

We now study what LeRoPE learns.
We examine the learned frequencies directly and identify a single band that carries most of the positional signal.

\begin{figure}[h]
  \centering
  \begin{subfigure}[t]{0.40\linewidth}
    \centering
    \includegraphics[width=\linewidth]{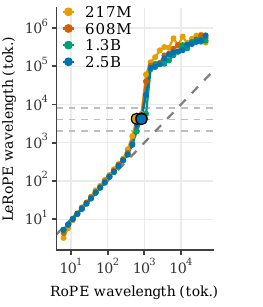}
    \label{fig:param_ladder}
  \end{subfigure}\hspace{-0.1025\linewidth}%
  \begin{subfigure}[t]{0.40\linewidth}
    \centering
    \includegraphics[width=\linewidth]{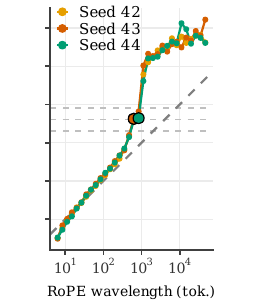}
    \label{fig:param_seeds}
  \end{subfigure}\hspace{-0.1025\linewidth}%
  \begin{subfigure}[t]{0.40\linewidth}
    \centering
    \includegraphics[width=\linewidth]{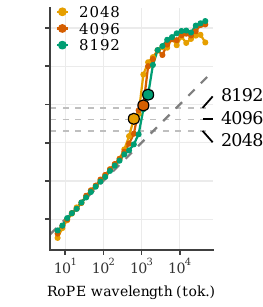}
    \label{fig:param_lengths}
  \end{subfigure}

  \caption{\textbf{(a)}: Variation in learned wavelengths across model sizes. \textbf{(b)}: Variation of learned wavelengths across 3 random seeds. \textbf{(c)}: Variation across different training lengths. Dashed horizontal lines mark shared 2048-, 4096-, and 8192-token context-length references; circled dots mark the dominant positional band; labels appear in the right panel. All plots show LeRoPE learns very similar wavelength profiles regardless of initialization and model size. The center and right plots are for the 217M models.}
  \label{fig:rope_lerope_parametric}
\end{figure}

\textbf{Learned Frequencies.} \autoref{fig:rope_lerope_parametric} plots each learned band's wavelength against RoPE's counterpart: low-wavelength bands closely track RoPE ($y=x$), mid-wavelength bands begin to diverge, and high-wavelength bands are pushed to a much larger, near-constant wavelength -- i.e.\ their frequency is driven toward zero.
High-frequency (low-wavelength) bands rotate slightly faster, with the effect more pronounced for smaller scales.
Mid-frequency bands begin to slow compared to RoPE frequencies, and low frequencies are driven down even closer to zero.
This pattern is consistent across all model scales and seeds we test (\autoref{fig:rope_lerope_parametric}a,b).

The point at which frequencies are suppressed is linked to the sequence lengths seen during training (\autoref{fig:rope_lerope_parametric}c): models trained on longer contexts diverge from the RoPE frequencies at lower frequencies than their short-context counterparts.

\textbf{Identifying Dominant Positional Bands.} \autoref{eq:rope_band_decomp} provides a per-band decomposition of the attention logit contributions.
By plotting the average contribution per relative distance, we find (\autoref{fig:hero_band}) that one band dominates 
in a nearly purely positional pattern for models with LeRoPE. 

The dominant band's wavelength tracks the training length across six runs: four model scales at $L_{\text{train}}=2048$, plus 217M runs at $L_{\text{train}}\in\{4096,8192\}$. The dominant band's wavelength is $\approx 2.205\,L_{\text{train}}$ (exact wavelengths can be in ~\autoref{tab:dom_band_stats}).
Fixed-frequency RoPE models trained at the same scale do not exhibit a dominant band. Instead, their positional contributions are distributed across several bands at far smaller magnitudes, %
although, as noted by \citet{oka2026probing}, they do contain strongly oscillatory channels.

\textbf{Learning With Fixed LeRoPE Frequencies.} Next, we ask whether the improvements from LeRoPE emerge from jointly training frequencies with the rest of the network, or solely from the discovery of a more optimal set of frequencies. To distinguish these, we train a 217M model with its frequencies frozen to the values learned by an independent LeRoPE run at the same scale (\autoref{tab:ppl_warmstart}).

Training with fixed LeRoPE frequencies yields 63\% of the improvement of full LeRoPE, suggesting that joint training dynamics and the final frequency set are both partially responsible for performance gains. Under the same settings, $p$-RoPE captures $10.4\%$ of the performance gains of LeRoPE. We can interpret $p$-RoPE as a coarse approximation of final LeRoPE frequencies, preserving faster frequencies while suppressing slower ones.

\begin{figure}[h]
  \centering
  \includegraphics[width=0.82\linewidth]{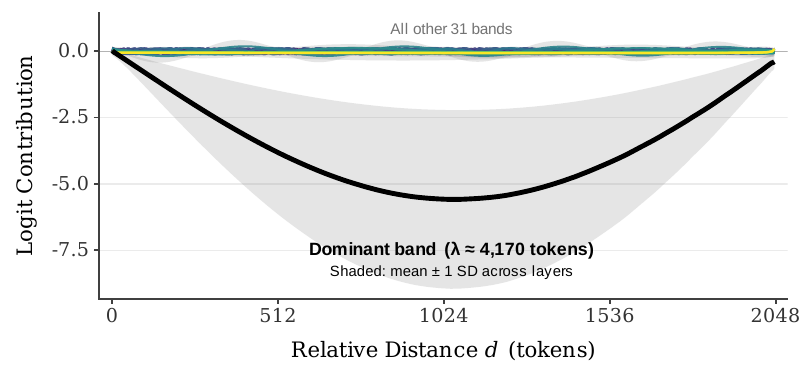}
  \caption{Band-level contributions to the logit score as a function of relative distance $d = s - t$ (in tokens), in a 2.5B LeRoPE model; the shaded band is $\pm 1$ standard deviation across the model's layers. LeRoPE produces a dominant positional band that penalizes logit scores according to their relative position; despite this, LeRoPE outperforms standard RoPE in perplexity and NIAH evaluations at all in-distribution offsets.}
  \label{fig:hero_band}
\end{figure}

\begin{tcolorbox}[boxsep=0mm,left=2.5mm,right=2.5mm]
\textbf{Section Summary:} {\em LeRoPE learns the same frequency profile
across seeds and scales: frequencies too slow for the training window are
suppressed, and a single dominant band emerges at $\lambda \approx
2.2\,L_{\text{train}}$. 
Training with the learned frequencies frozen captures $63.6\%$ of the gain.}
\end{tcolorbox}

\section{Extrapolation}
\label{sec:extrap}

While the main focus of our work is on in-distribution performance, understanding how LeRoPE extrapolates is of practical importance. RoPE-based models degrade sharply when operating on sequence lengths longer than $L_{train}$ with unmodified positions~\citep{Press2021TrainST}. The standard solution is to rescale frequencies back in-distribution at inference via positional interpolation (PI)~\citep{Chen2023ExtendingCW} or NTK-by-parts scaling with attention temperature~\citep{Peng2023YaRNEC}.

\textbf{Naive Extrapolation.} With unmodified positions, LeRoPE degrades more sharply than RoPE and $p$-RoPE (\autoref{aapp:extrap_extra}, \autoref{fig:extrap_yarn_zoom}). The dominant frequency band's logit contribution follows a negative half-cycle of a sinusoid with period $~\approx 2.2\,L_{\text{train}}$, so within the training window it remains negative or near zero.
On relative distances beyond those seen during training, the contribution becomes positive.
To measure the effect of the dominant positional band alone, we measure the performance of \textit{dominant-band interpolation}, which applies positional interpolation only to the dominant positional band.
We find that this prevents the rapid out-of-distribution perplexity explosion, roughly matching the performance of NTK-by-parts (\autoref{tab:extrap_ppl}). For extrapolation evaluations we filter the validation set of C4 for documents longer than 4096 in length.

\paragraph{Compatibility with YaRN.} Standard context-extension methods transfer to LeRoPE directly.
LeRoPE with NTK-by-parts scaling and YaRN's attention temperature outperforms the same recipe applied to RoPE and $p$-RoPE, and also outperforms dominant-band interpolation combined with a temperature.
For each method we sweep over YaRN's temperature hyper-parameter to ensure a fair comparison.  LeRoPE's in-distribution advantage therefore carries over under the extension tooling practitioners already use; the best configuration for extrapolation overall
is LeRoPE with YaRN (\autoref{tab:extrap_ppl}).
See \appref{aapp:extrap_extra} for a description of NTK-by-parts and YaRN.

\textbf{Needle-in-a-Haystack.} We evaluate the 2.52B models on single-needle NIAH with YaRN at sequence lengths up to 32k.
Single-needle retrieval is saturated within $L_{\text{train}}$ (\S\ref{ssec:niah}), so accuracy beyond it isolates where retrieval breaks down with length.
Averaged across needle depths, LeRoPE outperforms both RoPE and $p$-RoPE at every evaluated length (\autoref{fig:32k_niah}).

\begin{figure}[t]
  \centering
  \includegraphics[width=0.98\linewidth]{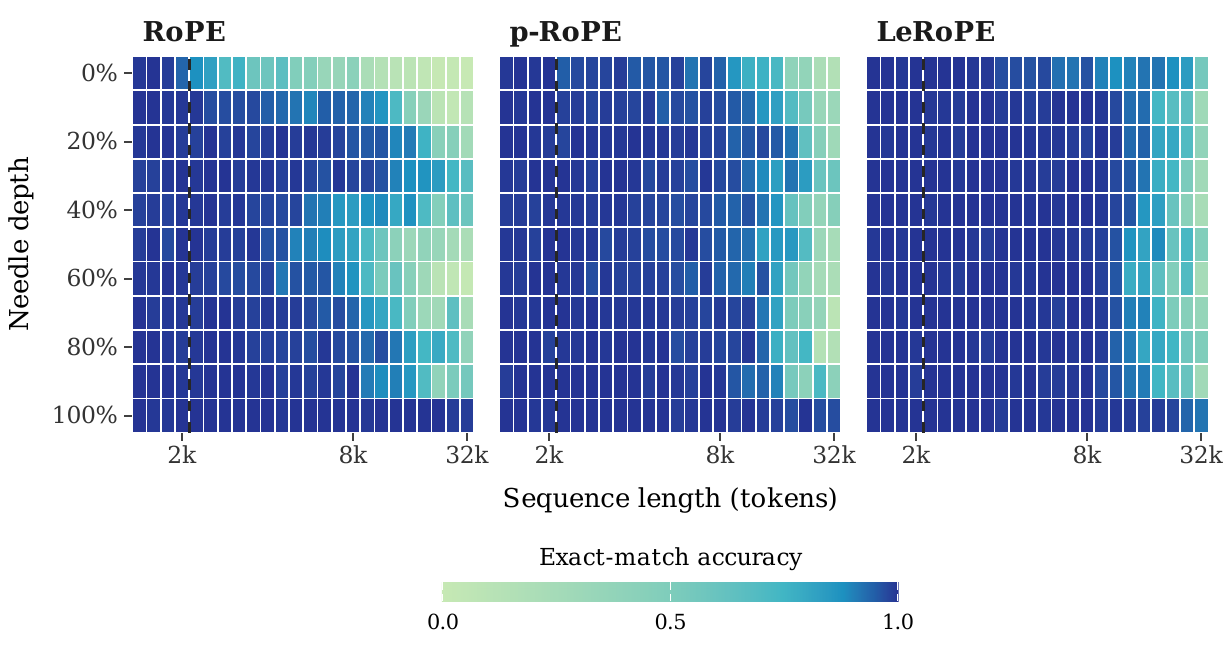}
  \caption{Single-needle NIAH accuracy with NTK-by-parts + YaRN for RoPE, $p$-RoPE, and LeRoPE on sequences up to 32k. We show that LeRoPE generally outperforms both RoPE and $p$-RoPE, especially at longer sequence lengths and for keys early in the context.}
  \label{fig:32k_niah}
\end{figure}

\section{Conclusion}

We propose LeRoPE, which makes RoPE's frequencies learnable through one scale per frequency band. Across a scaling ladder from 52M to 2.5B parameters, LeRoPE outperforms RoPE and partial RoPE at every scale, with RoPE requiring 3.4\% more compute to match it at the largest scale. Given its consistent performance improvements on in-distribution language modeling, future work may explore the merits of learned RoPE parameters in local layers of local-global models~\citep{Kamath2025Gemma3T}.

\section{Limitations}

We only train models up to 2.5B parameters, and our pretraining corpus is limited to C4, with small experiments on code. Further, while Chinchilla-optimal token budgets may be sound for scaling experiments, these models are severely under-trained compared to current language models.
Though cross-seed variance is measured for small model scales, runs about 608M remain trained on one seed and may contain different cross-seed variance.
The gains of LeRoPE on overtrained models at larger scales remain to be shown.

\bibliographystyle{plainnat} %
\bibliography{cites}

\newpage
\appendix
\part{}
\thispagestyle{empty}
{%
\bfseries\Large\vspace{-2.0em}%
\begin{center}
\makeatletter
\@toptitlebar
Supplementary Materials\\[0.25em]%
Learnable RoPE Frequencies Improve Language Modeling%
\@bottomtitlebar
\makeatother
\end{center}%
}
\renewcommand\ptctitle{}
\mtcsetfeature{parttoc}{open}{}
\setlength{\ptcindent}{0pt}
\noptcrule
\parttoc[c]
\clearpage
\suppressfloats[t]

\section{Ladder Notes}
\label{app:ladder_notes}
\subsection{Architecture}
\label{ssec:arch}

\begin{table}[t]
\centering
\caption{Optimal peak learning rates per scale. All three
positional-encoding methods share the same optimum at every swept scale;
for the two largest models the grid index is extrapolated from the smaller
scales, across which the optimum tracks $\eta \propto 1/N_{\text{tot}}$.}
\label{tab:lr_optima}
\begin{tabular}{lcl}
\toprule
Scale & Peak LR $\eta$ & Determination \\
\midrule
52M   & $1.10\times10^{-2}$ & swept \\
217M  & $2.76\times10^{-3}$ & swept \\
608M  & $9.77\times10^{-4}$ & swept \\
1.34B & $4.88\times10^{-4}$ & extrapolated \\
2.52B & $3.45\times10^{-4}$ & extrapolated \\
\bottomrule
\end{tabular}
\end{table}

We use the Enoki scaling ladder of \citet{Charles2025CommunicationEfficientLM, Ferbach2026LogarithmictimeSF}: the head dimension is fixed at 64 and depth and width are co-scaled, yielding models on the shallower, wider end. 
The architecture uses RoPE, pre-norm, QK-norm applied \textit{before} RoPE, no biases, and no weight tying. \autoref{table:enoki_sizes} lists the model configurations.

We deviate from the Enoki recipe as follows:
\begin{itemize}
  \item \textbf{Data and tokenizer:} C4~\citep{Raffel2019ExploringTL} with a
  32{,}000-token SentencePiece vocabulary, rather than FineWeb with the
  50{,}304-token GPT-2 tokenizer.
  \item \textbf{Document masking:} We mask attention across document boundaries within packed sequences; prior Enoki work does not state whether it does so.
  \item \textbf{Normalization:} RMSNorm in place of LayerNorm for the
  pre-norms, QK-norm, and final norm.
  \item \textbf{Learning-rate schedule:} cosine decay to zero rather than to
  $0.1\times$ peak, with a warmup of 1000 steps rather than $2\%$ of total
  steps.
  \item \textbf{Optimizer:} AdamW with $\beta_2 = 0.95$ rather than $0.999$
  ($\beta_1 = 0.9$ in both), and independent weight decay $\lambda = 8/T$,
  where $T$ is the total number of training steps (with weight decay of $\lambda=\omega/T$, \citeauthor{Ferbach2026LogarithmictimeSF} report $\omega = 4$ to be robust; both $\omega=4$ and $\omega=8$ lie within their swept range).
  \item \textbf{Batch size:} 64 sequences (global batch size of $131{,}072$ tokens)
  instead of 32, in line with the batch-size findings of
  \citet{Charles2025CommunicationEfficientLM} for this ladder.
  \item \textbf{Gradient clipping:} global-norm $1.0$ instead of $0.5$.
\end{itemize}

All weights are initialized with fan-in scaling,
$\text{std} = 1/\sqrt{\text{fan-in}}$; the MLP and attention output
projections receive an additional depth-dependent factor,
$\text{std} = 1/\sqrt{2\, n_{\text{layer}} \cdot \text{fan-in}}$; and token
embeddings use $\text{std} = 1/\sqrt{d_{\text{embd}}}$, giving each
embedding vector unit expected norm. 
See \autoref{table:training_config_main} for the complete set of hyper-parameters.

\renewcommand{\arraystretch}{1.1}
\begin{table}[t]
\centering
\caption{Architectural details of our scaling ladder following
\citet{Ferbach2026LogarithmictimeSF}. Head dimension is fixed to 64 across scale, head count is $\tfrac{4}{3}n_{\text{layer}}$, width is $d_{\text{embd}} = 64
\times \text{heads}$. We use a vocabulary of size $32{,}000$ and include the non-embedding ($N_{\text{ne}}$) and total ($N_{\text{ne}}$) parameter counts.}
\label{table:enoki_sizes}
\begin{tabular}{ccccc}
\toprule
Layers & Embd Dim & MLP Hidden &
$N_{\text{ne}}$ & $N_{\text{tot}}$ \\
\midrule
6    & 512   & 2,048  & 18.87M  & 51.64M  \\
12   & 1,024 & 4,096  & 150.99M & 216.53M \\
18   & 1,536 & 6,144  & 509.61M & 607.91M \\
24   & 2,048 & 8,192  & 1.21B   & 1.34B   \\
30   & 2,560 & 10,240 & 2.36B   & 2.52B   \\
\bottomrule
\end{tabular}
\end{table}
\setlength{\textfloatsep}{8pt} 
\renewcommand{\arraystretch}{.9}
{\footnotesize
\ctable[
caption={Training Hyperparameters},
label={table:training_config_main},
pos=t!
]{l | c}{\tnote[1]{$T$ denotes total training steps; weight decay omitted on LeRoPE parameters, embedding parameters, and normalization layers.}}
{
\toprule
\textbf{Parameter} & \textbf{Value} \\
\midrule
Sequence length & 2048 \\
Batch size (sequences) & 64 \\
Tokens per batch & 131,072 \\
Vocabulary size & 32,000 \\
Optimizer & AdamW ($\beta_1{=}0.9$, $\beta_2{=}0.95$) \\
Adam $\epsilon$ & 1e-8 \\
Warmup window & 1000 steps \\
LR rule, $\gamma(t)$ & cosine decay \\
Final LR & 0 \\
Precision/Optimizer state precision & bfloat16/float32 \\
Gradient clipping & 1.0 (global norm) \\
Weight-decay\tmark[1], $\lambda$ & $8/T$ (independent) \\
\bottomrule
}
}

\subsection{Compute Multiplier Estimation}
\label{aapp:cm_calc}

\paragraph{Choice of interpolation axis.}
We use the piecewise-linear baseline of \citet{Ferbach2026LogarithmictimeSF},
but interpolate in $\log C$ rather than raw $C$. For compute budgets outside
the ladder, we continue the nearest line segment.
Loss--compute curves are convex. Linear interpolation in $C$ lies above the
true curve between ladder points, so inverting this baseline overestimates the
compute budget and the multiplier. With nine scales and adjacent budgets
within a factor of ${\sim}2$--$7$, the effect is small on their ladder.
Across our decade-wide gaps, it is severe in $C$ but smaller in $\log C$,
where the power-law curve is far straighter. At the largest scale, LeRoPE's
loss is below that of every RoPE model. Estimating the multiplier requires
extrapolating the RoPE curve, so the top-scale values are conservative.

\paragraph{Parametric alternative: efficiency gain.}
The more common estimator fits a scaling law to the baseline
ladder~\citep{kaplan2020scalinglawsneurallanguage, hoffmann2022trainingcomputeoptimallargelanguage, mai-thinking},
\begin{equation}
    L = f(C) = A C^{-\alpha} + E,
\end{equation}
and reports the efficiency gain $\mathrm{EG} = f^{-1}(L')/C'$ for a variant
reaching loss $L'$ at cost $C'$. Fitting the RoPE ladder gives $A = 26.4$,
$\alpha = 0.0574$, and $E = 0.641$. The fitted $\alpha$ is within the range
reported for fits with an irreducible term by
\citet[Table~15]{Ferbach2026LogarithmictimeSF}.
\autoref{tab:eg_vs_cm} compares the efficiency gains with the piecewise
multipliers of \autoref{ssec:compute_multipliers}, which differ by at most
$0.009$ at every scale.

\begin{table}[h]
\centering
\caption{LeRoPE compute multipliers under the two estimators.}
\label{tab:eg_vs_cm}
\begin{tabular}{lccccc}
\toprule
 & 52M & 217M & 608M & 1.34B & 2.52B \\
\midrule
Piecewise $\log C$ (ours) & 1.071 & 1.048 & 1.047 & 1.032 & 1.034 \\
Efficiency gain (fitted)  & 1.064 & 1.047 & 1.039 & 1.039 & 1.031 \\
\bottomrule
\end{tabular}
\end{table}

\paragraph{Fit instability in compute multiplier estimates.}
With five ladder points and three parameters, the irreducible loss $E$ is
weakly identified. Every $E \in [0.3, 1.0]$ fits the ladder with RMSE below
$0.006$ nats (\autoref{tab:eg_sensitivity}). The RMSE differences between
these fits are comparable to per-run seed variability (\autoref{tab:seeds}),
so run-to-run noise can move the best-fit $E$ across this range. A
misspecified fit can have residuals larger than the $0.003$--$0.010$-nat gaps
being measured, whereas the piecewise estimator has no residual at measured
scales. \citet{Ferbach2026LogarithmictimeSF} report the same identifiability
issue. Their fitted single-power-law exponent ranges from $0.048$ to $0.074$
as the assumed saturation level changes. Our ladder is sparser and our effect
is smaller, so the problem is more severe.

\begin{table}[h]
\centering
\caption{Sensitivity of the fitted efficiency gain to the assumed
irreducible loss $E$. Every $E \in [0.3, 1.0]$ fits the RoPE ladder with
RMSE below $0.006$ nats, yet the implied per-scale gains vary by up to
$0.14$ and include impossible values below $1$ (bold), despite LeRoPE
reaching strictly lower loss at every scale.}
\label{tab:eg_sensitivity}
\begin{tabular}{lcc ccccc}
\toprule
 & & & \multicolumn{5}{c}{EG per scale} \\
\cmidrule(lr){4-8}
$E$ & $\alpha$ & RMSE (nats) & 52M & 217M & 608M & 1.34B & 2.52B \\
\midrule
0.0   & 0.044 & $6.4\mathrm{e}{-3}$ & 1.028 & 1.118 & 1.090 & 1.031 & \textbf{0.952} \\
0.3   & 0.050 & $3.8\mathrm{e}{-3}$ & 1.043 & 1.089 & 1.069 & 1.034 & \textbf{0.982} \\
0.5   & 0.054 & $1.8\mathrm{e}{-3}$ & 1.055 & 1.065 & 1.052 & 1.036 & 1.008 \\
0.641 & 0.057 & $5.3\mathrm{e}{-4}$ & 1.064 & 1.047 & 1.039 & 1.039 & 1.031 \\
0.8   & 0.062 & $2.2\mathrm{e}{-3}$ & 1.075 & 1.024 & 1.022 & 1.043 & 1.063 \\
1.0   & 0.069 & $5.5\mathrm{e}{-3}$ & 1.090 & \textbf{0.989} & \textbf{0.997} & 1.051 & 1.118 \\
\bottomrule
\end{tabular}
\end{table}

\paragraph{Sharing the saturation across methods.}
Following \citet{Ferbach2026LogarithmictimeSF}, we fit RoPE, $p$-RoPE, and
LeRoPE jointly with a shared $E$ and per-method $(A, \alpha)$. The best fit
($E = 0.643$,
joint RMSE $4.7\times 10^{-4}$ nats) leaves the efficiency gains unchanged
to within $0.001$. The identifiability window does not narrow; every shared
$E \in [0.3, 1.0]$ still fits within $0.006$ nats. Because the three
curves are nearly parallel, adding methods provides little information about
the floor. The degeneracy comes from the range of scales, not the number of
points. The fitted exponents agree to within $0.5\%$ ($\alpha \in [0.0573,
0.0576]$), similarly to \citet{Ferbach2026LogarithmictimeSF}.

\paragraph{Richer functional forms.}
\citet{Ferbach2026LogarithmictimeSF} fit a double power law
$L = a + bC^{-c} + eC^{-f}$ across five optimizer curves with a shared
saturation $a$. It is identifiable with ${\sim}10$ scales per curve, but our
five-point ladder has no residual degrees of freedom. A hinge-type
broken power law~\citep{caballero2023brokenneuralscalinglaws} fits our ladder \emph{worse} than
the saturating form (RMSE $1.1\times 10^{-3}$ vs.\ $5.3\times 10^{-4}$
nats), recovers nearly equal exponents on both sides of a weakly identified
break, and exhibits the same efficiency-gain instability, including values
below $1$. The ladder favors saturation over a break.

We therefore report efficiency gain as corroboration and keep the piecewise
estimator primary.

\subsection{Seed Variability}
\label{ssec:seeds}

Seeds control model initialization and data ordering. To bound their
effect, we repeat the 217M runs (as well as RoPE with two additional bases) and the 608M runs under three matched seeds (Table~\ref{tab:seeds}).
All comparisons are within a seed: the two runs share initialization draws and data order, so their loss fluctuations are strongly correlated and the paired margins vary far less than the marginal deviations ($s_d = 0.0019$ vs.\ $\sigma$ up to $0.0020$ at 217M; 5$s_d = 0.0010$ at 608M). LeRoPE improves on every baseline under every 5 individual seeds at both scales, and per-method variability decreases with scale ($\sigma_{\text{LeRoPE}}$: $0.0019 \to 0.0004$), supporting the reading that the unseeded larger-scale gaps in Table~\ref{tab:ppl} also exceed noise.

\newpage
\section{Gradient Dynamics}
\label{app:grad_dynamics}

\theoremstyle{plain}

For clarity in this section, we will omit the pre-softmax scaling factor $1/\sqrt{d}$, operate on a single head, and take $o_i$ to be the post-softmax accumulation of values before output projection $W_o$.

\subsection{Definitions and Lemmas}

We begin with the following definitions:

\newcommand{\Imag}[1]{\text{Im}\left(#1\right)}
\newcommand{\Real}[1]{\text{Re}\left(#1\right)}
\newcommand{\E}{\mathbb{E}}

\begin{align*}
\label{eq:definitions}
    g_s &\triangleq \frac{\partial \LLL}{\partial o_s} \\
    S_{st}^{(m)} &\triangleq (s-t)\|q_s^{(m)}\|\|k_t^{(m)}\| \sin \left( \phi_{st}^{(m)} + (s-t)\theta^{(m)} \right) \\
    C_m(\delta) &\triangleq \sum_{\substack{s,t \\s-t=\delta}} \alpha_{st}   \|q_s^{(m)} \| \|k_t^{(m)}\|
    e^{i\phi_{st}^{(m)}}
    (v_t - o_s)^T g_s \\ 
    P(\theta) &= \Real{
    \sum_{\delta=1}^{L-1} C(\delta) e^{i \delta \theta}
    }
\end{align*}

\begin{lemma}
\label{lem:downstream_gradient}
Let $g_s = \partial \LLL / \partial o_s$ be the gradient of the loss with respect to output vector $o_s$. Let $\partial \LLL / \partial D_{st}^{(m)}$ be the gradient of the loss with respect to the dot product produced by band $m$ from query position $s$ to key position $t$.

Then $\partial \LLL / \partial D_{st}^{(m)} = \alpha_{st} g_s^T(v_t - o_s)$.
\end{lemma}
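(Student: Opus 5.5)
The plan is a chain-rule computation through the softmax. The band-$m$ contribution $D_{st}^{(m)}$ enters the loss only through the attention logit $z_{st} = \sum_{m'} D_{st}^{(m')}$, and $\partial z_{st}/\partial D_{st}^{(m)} = 1$. Hence $\partial \LLL/\partial D_{st}^{(m)} = \partial \LLL/\partial z_{st}$, and it suffices to compute the gradient with respect to the logit. In particular, the result does not depend on $m$. We work with a single head and drop the $1/\sqrt{d}$ scaling, as stipulated at the start of this section.

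Fix the query position $s$. The logits $z_{s\cdot}$ affect the loss only through $o_s = \sum_{t'} \alpha_{st'} v_{t'}$, where $\alpha_{st'} = \mathrm{softmax}_{t'}(z_{s\cdot})$; the logits of other query rows do not touch $o_s$. By the chain rule,
\begin{equation*}
\frac{\partial \LLL}{\partial z_{st}} = g_s^T \frac{\partial o_s}{\partial z_{st}} = \sum_{t'} \frac{\partial \alpha_{st'}}{\partial z_{st}}\, g_s^T v_{t'} .
\end{equation*}
I will then substitute the softmax Jacobian $\partial \alpha_{st'}/\partial z_{st} = \alpha_{st'}(\mathbb{1}[t'=t] - \alpha_{st})$. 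This gives
\begin{equation*}
\frac{\partial \LLL}{\partial z_{st}} = \alpha_{st}\, g_s^T v_t - \alpha_{st} \sum_{t'} \alpha_{st'}\, g_s^T v_{t'} = \alpha_{st}\, g_s^T (v_t - o_s),
\end{equation*}
using the definition of $o_s$.

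The only subtlety is the causal mask. The softmax runs over the unmasked keys $t' \le s$ (or $t'<s$, matching the sum convention used for $o_s$). Masked entries have $\alpha_{st'} = 0$ and carry no gradient, so the Jacobian identity holds on the support and the sums match.

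A second point to state explicitly is that $v_t$ and $o_s$ are the quantities before the output projection $W_o$, so that $g_s$ is defined consistently. No real obstacle is expected: this is the standard softmax-backprop identity.
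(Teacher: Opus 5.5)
Your proposal is correct and follows essentially the same route as the paper: route $D_{st}^{(m)}$ through the logit $z_{st}$ with $\partial z_{st}/\partial D_{st}^{(m)} = 1$, then apply the softmax Jacobian to $o_s$ and contract with $g_s$ to get $\alpha_{st}\, g_s^T(v_t - o_s)$. Your remarks on the causal mask and the pre-$W_o$ convention only make explicit what the paper's setup already assumes.
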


\begin{proof}
\label{proof:downstream_gradient}
First, note that the only effect of the value $D_{st}^{(m)}$ is through its effect on the output vector at position $s$, $o_s$.

Thus, we have:

\begin{align*}
    \frac{\partial \LLL}{\partial D_{st}^{m}} &= \left(\frac{\partial \LLL}{\partial o_s} \right)^T \left(
    \frac{\partial o_s}{\partial D^{(m)}_{st}}
    \right) \\
    &= g_s^T \left(
    \frac{\partial o_s}{\partial D^{(m)}_{st}}
    \right)
\end{align*}

So we only have to find the partial derivative of $o_i$ with respect to the band-$s$ $QK$ product.

The softmax Jacobian gives:

\begin{align*}
    \frac{\partial o_s}{\partial z_{st}} &= \sum_{t' < s} \alpha_{st'}(\mathbbm{1}[t=t'] - \alpha_{st})v_{t'} \\
    &= \alpha_{st}(v_t - o_s) \\
    \frac{\partial z_{st}}{\partial D_{st}^{(m)}} &= 1 \\
    \implies \frac{\partial o_s}{\partial D_{st}^{(m)}} &= \alpha_{st}(v_t - o_s) \\
    \implies \frac{\partial \LLL}{\partial D_{st}^{(m)}} &= \alpha_{st} g_s^T (v_t - o_s)
\end{align*}

\end{proof}

\begin{lemma}
\label{lem:dtft_gradient}
Define $P_m(\theta)$ as the real component of the discrete-time Fourier Transform of the signal $C_m(\delta)$ with frequency $\theta$, as  above.

Then $\frac{\partial \LLL}{\partial \theta^{(m)}} = P_m'(\theta^{(m)})$.
\end{lemma}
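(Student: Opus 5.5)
We would prove Lemma~\ref{lem:dtft_gradient} with a single chain-rule computation, routed through the band-level logit contributions $D^{(m)}_{st}$ and combined with Lemma~\ref{lem:downstream_gradient}.

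The first step is to write each band's contribution to the logit in polar form. By Eq.~\eqref{eq:rope_band_decomp}, the band-$m$ term is $D^{(m)}_{st} = (q_s^{(m)})^\intercal \rho(\theta^{(m)})^{t-s} k_t^{(m)}$. Identify each 2D chunk with a complex number. Rotating $k_t^{(m)}$ by $(t-s)\theta^{(m)}$ and taking the inner product with $q_s^{(m)}$ gives
\[
D^{(m)}_{st} = \|q_s^{(m)}\|\,\|k_t^{(m)}\|\cos\!\big(\phi^{(m)}_{st} + (s-t)\theta^{(m)}\big),
\]
where $\phi^{(m)}_{st}$ is the key-to-query angle with the sign convention chosen to match Eq.~\eqref{eq:freq_grad}. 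The key structural fact is that $q_s$, $k_t$, $v_t$ do not depend on $\theta^{(m)}$, because QK-norm and the projections are applied before the rotation. The only dependence of $D^{(m)}_{st}$ on $\theta^{(m)}$ is therefore through the cosine, so
\[
\partial D^{(m)}_{st}/\partial\theta^{(m)} = -(s-t)\|q_s^{(m)}\|\|k_t^{(m)}\|\sin\!\big(\phi^{(m)}_{st}+(s-t)\theta^{(m)}\big) = -S^{(m)}_{st}.
\]

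The second step sums over all causal pairs. The quantity $\theta^{(m)}$ enters the loss only through the $D^{(m)}_{st}$ with $t<s$, and each $D^{(m)}_{st}$ enters only via the logit $z_{st}$. Applying Lemma~\ref{lem:downstream_gradient} gives
\[
\partial\LLL/\partial\theta^{(m)} = -\sum_{s}\sum_{t<s} \alpha_{st}\, g_s^\intercal(v_t-o_s)\, S^{(m)}_{st}.
\]
If the frequencies are shared across heads and layers, this sum also runs over all heads and layers. The definition of $C_m$ then simply aggregates over those as well, and the argument is unchanged. Next, regroup the double sum by offset $\delta = s-t \in \{1,\dots,L-1\}$.

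The third step differentiates $P_m$ directly. Since $(v_t-o_s)^\intercal g_s$ is real, we have
\[
\mathrm{Re}\big(C_m(\delta)e^{i\delta\theta}\big) = \sum_{s-t=\delta} \alpha_{st}\|q_s^{(m)}\|\|k_t^{(m)}\|\cos(\phi^{(m)}_{st}+\delta\theta)\,(v_t-o_s)^\intercal g_s.
\]
Differentiating in $\theta$ brings down $-\delta\sin(\cdot)$. Summing over $\delta$ gives $P_m'(\theta^{(m)})$, which is exactly the regrouped expression from the second step. Equivalently, in the phasor form, the derivative of $\sum_\delta R_m(\delta)\cos(\Phi_m(\delta)+\delta\theta)$ is $-\sum_\delta \delta R_m(\delta)\sin(\Phi_m(\delta)+\delta\theta)$.

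We expect the main obstacle to be the bookkeeping in the first step, namely fixing the sign and orientation convention of $\phi^{(m)}_{st}$. With the convention $\rho(\theta)^{t-s}$, the rotation angle is $(t-s)\theta$. The phase must be defined as the angle from key to query so that the cosine argument reads $\phi+(s-t)\theta$ rather than $\phi-(s-t)\theta$. Otherwise the result would hold for $P_m(-\theta)$ instead. We would settle this first by an explicit complex-number computation, $D=\mathrm{Re}\big(\bar q\, k\, e^{i(t-s)\theta}\big)$, and only then run the chain rule.
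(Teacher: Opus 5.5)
Your proof is correct and follows essentially the same route as the paper. You substitute $\partial\LLL/\partial D_{st}^{(m)}=\alpha_{st}\,g_s^\intercal(v_t-o_s)$ from Lemma~\ref{lem:downstream_gradient} into the per-band sine gradient, regroup by offset $\delta=s-t$, and recognize the result as the $\theta$-derivative of $\mathrm{Re}\sum_\delta C_m(\delta)e^{i\delta\theta}$ with $C_m$ held fixed. The differences are cosmetic: you derive Eq.~\eqref{eq:freq_grad} explicitly, including the key-to-query sign convention for $\phi^{(m)}_{st}$, where the paper takes it as given. You also differentiate the cosine form of $P_m$ directly, whereas the paper uses the identity $\delta e^{i\delta\theta}=-i\,\frac{d}{d\theta}e^{i\delta\theta}$ inside $-\mathrm{Im}(\cdot)$.
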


\begin{proof}
\label{proof:dtft_gradient}

First, consider the full sum
from \autoref{eq:freq_grad}. We may split the sum by relative offsets $\delta \triangleq s - t$, and for notational compactness define $g_{st}^{(m)} \triangleq \partial \mathcal{L} / \partial D_{st}^{(m)}$

\begin{align*}
\frac{\partial \mathcal{L}}{\partial \theta^{(m)}} &= -
    \sum_{t < s}
    \left(s - t\right)\| q_s \| \|k_t \| \sin \left(
    \phi_{st} + (s - t)\theta^{(m)} \right) \frac{\partial \mathcal{L}}{\partial D_{st}^{(m)}} \\
    &= -\sum_{\delta = 1}^{L-1} \sum_{\substack{s,t \\ s-t = \delta}}
    \delta \, \| q_s \| \| k_{t} \| \sin \left(
    \phi_{st} + \delta \theta^{(m)}
    \right)
    g_{st}^{(m)} \\
    &= -\sum_{\delta = 1}^{L-1} \sum_{\substack{s,t \\ s-t = \delta}}
    \delta \, \| q_s \| \| k_{t} \| \sin \left(
    \phi_{st} + \delta \theta^{(m)}
    \right)
    \alpha_{st} (v_t - o_s)^T g_s
\end{align*}

To split the sine, write it as an exponential and note that all scalar terms outside of the sine are real, allowing us to apply the imaginary operator across the final summed value:

\begin{align*}
    \frac{\partial \mathcal{L}}{\partial \theta^{(m)}} &= -\sum_{\delta = 1}^{L} \sum_{\substack{s,t \\ s-t=\delta}} \delta \Imag{e^{i\phi_{st}} e^{i \delta \theta^{(m)}}} \|q_s \| \|k_t\| \alpha_{st} (v_t - o_s)^T g_s \\
    &= -\Imag{\sum_{\delta = 1}^{L-1} \sum_{\substack{s,t \\s-t=\delta}} \delta e^{i\phi_{st}} e^{i \delta \theta^{(m)}} \|q_s \| \|k_t\| \alpha_{st}(v_t - o_s)^T g_s } \\
    &= -\Imag{\sum_{\delta = 1}^{L-1} 
    \delta e^{i \delta \theta^{(m)}}
    \sum_{\substack{s,t \\s-t=\delta}} e^{i\phi_{st}}  \|q_s \| \|k_t\| \alpha_{st} (v_t - o_s)^T g_s} \\
    &= -\Imag{
        \sum_{\delta = 1}^{L-1} \delta C(\delta) e^{i \delta \theta^{(m)}}
    } \\
    &= -\Imag{ 
        \sum_{\delta = 1}^{L-1} \left(-i \frac{d}{d \theta^{(m)}} \left[ C(\delta) e^{i \delta \theta^{(m)}}\right]\right)
        } \\
    &= \frac{d}{d \theta^{(m)}} \Real{
    \sum_{\delta = 1}^{L - 1}
    C(\delta) e^{i \delta \theta^{(m)}}
    } \\
    &= P'_m(\theta^{(m)})
\end{align*}

\end{proof}

\begin{lemma}
As previously, define $S_{st}^{(m)}$ to be the offset-weighted sine term at RoPE-band $m$; let $v_t$ be the attention value vector at position $s$ and $g_s$ be the downstream loss gradient $\partial \LLL / \partial o_s$. Then the gradient with respect to RoPE parameter $\theta^{(m)}$ is as follows:
$$\frac{\partial \LLL}{{\partial \theta^{(m)}}} = - \sum_{s} \text{Cov}_{t ~\sim p_s(t)} \left(
    v_t^T g_s , S_{st}^{(m)}
    \right)
    $$
\label{lem:cov_gradient}
\end{lemma}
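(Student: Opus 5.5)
We start from the per-band decomposition of the gradient in \autoref{eq:freq_grad}, rewritten in the notation of this section. Let $D_{st}^{(m)}$ denote the band-$m$ contribution to the logit $z_{st}$. By the polar form, $D_{st}^{(m)}=\|q_s^{(m)}\|\|k_t^{(m)}\|\cos(\phi_{st}^{(m)}+(s-t)\theta^{(m)})$. Differentiating in $\theta^{(m)}$ gives
\[
\frac{\partial D_{st}^{(m)}}{\partial \theta^{(m)}}=-S_{st}^{(m)}.
\]
This is the same computation that yields \autoref{eq:freq_grad}. The chain rule then gives
\[
\frac{\partial \LLL}{\partial \theta^{(m)}}=-\sum_s\sum_{t<s} S_{st}^{(m)}\,\frac{\partial \LLL}{\partial D_{st}^{(m)}}.
\]
Next we substitute Lemma~\ref{lem:downstream_gradient}, which gives $\partial\LLL/\partial D_{st}^{(m)}=\alpha_{st}\,g_s^T(v_t-o_s)$. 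The gradient becomes
\[
-\sum_s \sum_{t<s}\alpha_{st}\,S_{st}^{(m)}\,g_s^T(v_t-o_s).
\]

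The key step is to recognize the inner sum over $t$ as a covariance under the attention distribution $p_s(t)\triangleq\alpha_{st}$ for fixed query $s$. These weights are nonnegative and sum to one over $t<s$, so they define a genuine probability distribution. The scalar $g_s^T o_s$ does not depend on $t$. Since $o_s=\sum_{t<s}\alpha_{st}v_t$, it equals the mean
\[
g_s^To_s=\mathbb{E}_{t\sim p_s}[v_t^Tg_s].
\]
Hence the inner sum is
\[
\sum_{t<s}\alpha_{st}\,S_{st}^{(m)}\big(v_t^Tg_s-\mathbb{E}_{p_s}[v_t^Tg_s]\big).
\]
This is $\mathbb{E}_{p_s}\big[S^{(m)}_{st}\,(X_t-\mathbb{E}X)\big]$ with $X_t=v_t^Tg_s$. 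The standard identity
\[
\mathbb{E}[Y(X-\mathbb{E}X)]=\mathbb{E}[XY]-\mathbb{E}X\,\mathbb{E}Y=\mathrm{Cov}(X,Y)
\]
then identifies the inner sum as $\mathrm{Cov}_{t\sim p_s(t)}(v_t^Tg_s,\,S_{st}^{(m)})$. Summing over $s$ with the leading minus sign gives the claim.

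The only delicate point is bookkeeping rather than analysis. We must make sure the band-$m$ dependence of $z_{st}$ on $\theta^{(m)}$ is entirely through $D_{st}^{(m)}$, with the other bands independent of $\theta^{(m)}$, so that $\partial z_{st}/\partial D^{(m)}_{st}=1$ as used in Lemma~\ref{lem:downstream_gradient}. We must also get the sign of the derivative of the cosine right. Since QK-norm is applied before the rotation, the norms and the angle $\phi_{st}^{(m)}$ do not depend on $\theta^{(m)}$, so this holds. We also restrict to a single head and suppress the $1/\sqrt{d}$ factor, as stated at the start of this section. Both the $1/\sqrt{d}$ factor and any sum over heads and layers (with shared $\theta^{(m)}$) would just multiply or add identical covariance terms. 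The main thing to check carefully is that the centering term $o_s$ exactly supplies the subtracted mean. This uses that $p_s$ sums to one under the causal mask, with any document mask only restricting the support of $p_s$.
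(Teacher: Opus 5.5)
Your proposal is correct and follows the paper's proof essentially step for step. You start from \autoref{eq:freq_grad} with Lemma~\ref{lem:downstream_gradient} substituted, read $\alpha_{st}$ as the distribution $p_s(t)$, and use $\mathbb{E}_{t\sim p_s}[v_t]=o_s$ to see that the factor $(v_t-o_s)^T g_s$ has zero mean, so the expected product is the covariance. Your extra bookkeeping ($\partial D^{(m)}_{st}/\partial\theta^{(m)}=-S^{(m)}_{st}$ and $\partial z_{st}/\partial D^{(m)}_{st}=1$) only spells out what the paper takes from \autoref{eq:freq_grad}; the appeal to QK-norm is unnecessary, since within a single layer the unrotated $q_s,k_t$ never depend on $\theta^{(m)}$ anyway.
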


\begin{proof}
\label{proof:cov_gradient}
As in \autoref{lem:dtft_gradient}, note

\begin{align*}
\frac{\partial \LLL}{\partial \theta^{(m)}}
&= -\sum_{s=1}^{s_{max}} \sum_{t = 0}^{s-1}
    \alpha_{st} (s-t) \, \| q_s \| \| k_{t} \| \sin \left(
    \phi_{st} + (s-t) \theta^{(m)}
    \right)
    (v_t - o_s)^T g_s  \\
&= -\sum_{s=1}^{s_{max}} \sum_{t = 0}^{s-1}
    \alpha_{st} S_{st}^{(m)}
    (v_t - o_s)^T g_s  \\
    &= -\sum_{s=1}^{s_{max}} \E_{t ~\sim p_s(t) }  \left[
    S_{st}^{(m)} (v_t - o_s)^T g_s
    \right]
\end{align*}
\end{proof}

Now, note that $\E_{t ~\sim p_s(t)} [(v_t - o_s)^Tg_s] = 0$, as $\E_{t ~\sim p_s(t)} [v_t] = o_s$, and $g_s$ factors out.

Thus, the expected product of the two terms equals their covariance.

So, as desired, we obtain:

\begin{align*}
\frac{\partial \LLL}{\partial \theta^{(m)}}
    &= -\sum_{s=1}^{s_{max}} \text{Cov} \left(S_{st}^{(m)}, v_t^T g_s \right)
\end{align*}

In expectation, with enough samples, in the absence of periodic structure in downstream attention-logit gradients, we would expect this update term to remain centered roughly around zero.

\newpage
\subsection{Phasor Plots}
\label{sec:phasor_plots}

In \autoref{fig:quiver_plots_grad}, we plot the per-offset gradient phasors for all bands, for both LeRoPE and RoPE models.
The final gradient corresponds to the real component of the aggregate phasor.
The dashed line denotes the direction of this aggregate phasor, and the red dot the aggregate phasor, normalized between 0 (full cancellation) and 1 (full coherence).

Gradient (length-weighted) phasors take on similar scales across offsets, as compared to the non-length-weighted objective phasors. This phenomenon is visible in \autoref{fig:phasor_norm_hists}.

\begin{figure}[p]
\centering
\includegraphics[width=\textwidth]{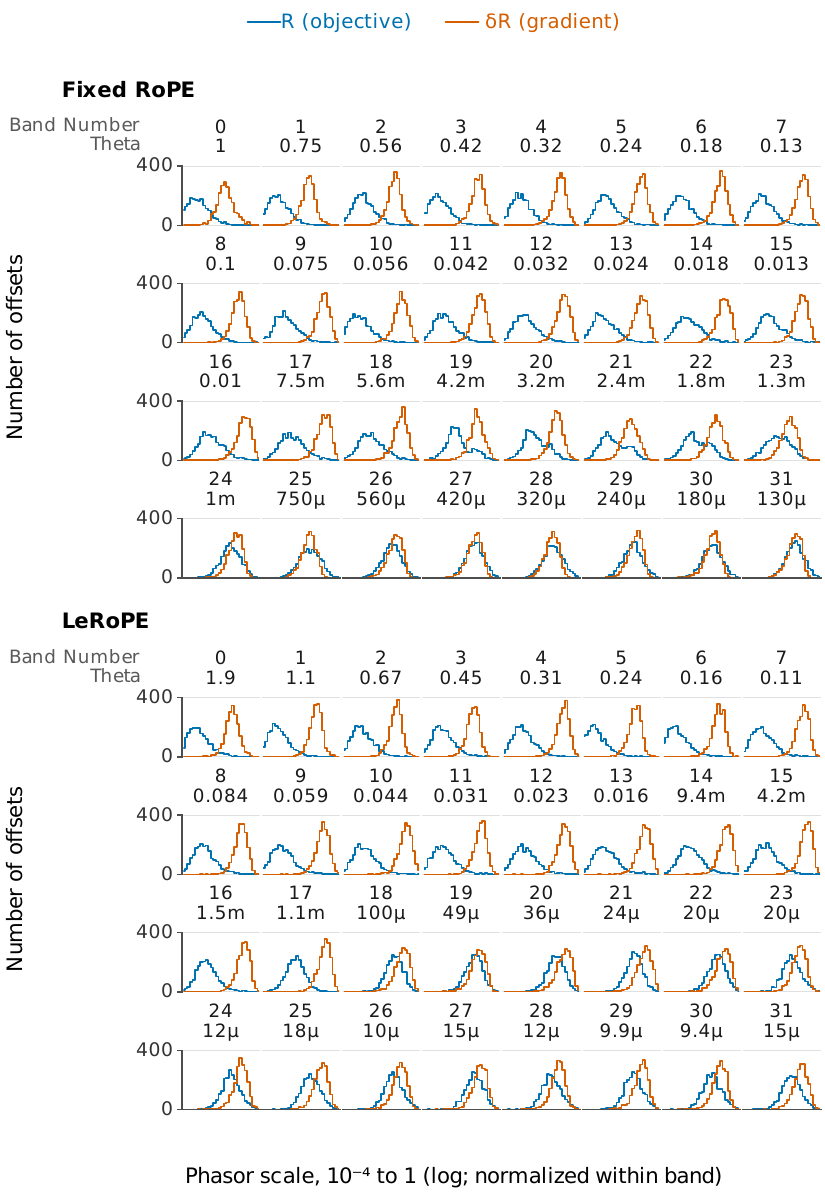}
\caption{Normalized per-offset phasor-scale distributions for fixed RoPE (top) and LeRoPE (bottom) at $L=2048$. Panel headers report the band number and operating $\theta$, with \textup{m}$=10^{-3}$ and $\mu=10^{-6}$. Blue curves weight each offset by $R_\delta$ (objective), while vermillion curves weight by $\delta R_\delta$ (frequency gradient); each distribution is normalized by its within-band maximum. The gradient-weighted distributions move toward one across bands, indicating more even contributions across offsets.}
\label{fig:phasor_norm_hists}
\end{figure}

\begin{figure}[p]
\centering
\includegraphics[width=\textwidth]{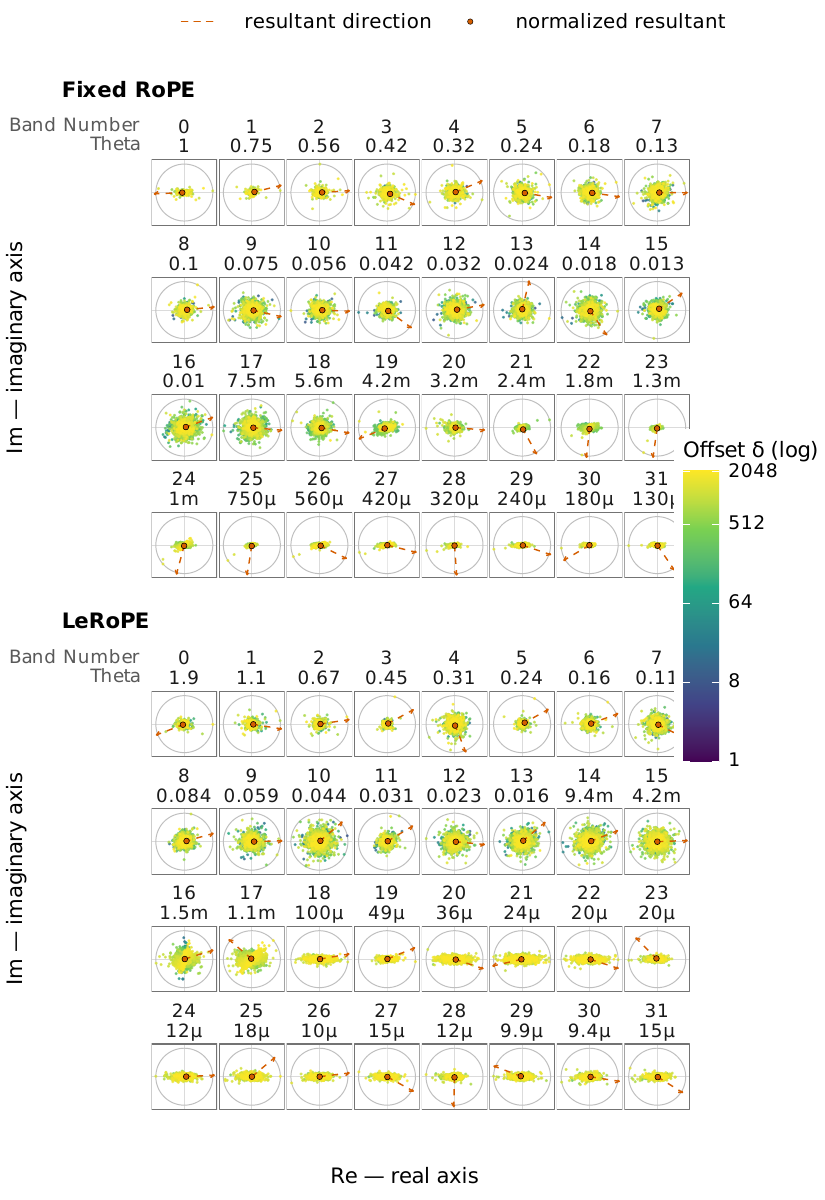}
\caption{Gradient-weighted per-offset phasors for fixed RoPE (top) and LeRoPE (bottom) at $L=2048$. Panel headers report the band number and operating $\theta$, with \textup{m}$=10^{-3}$ and $\mu=10^{-6}$. Radius is normalized within band, color encodes log offset $\delta$, the dashed vermillion arrow gives the aggregate direction, and the outlined dot gives the normalized resultant magnitude. The panels show how cancellation and alignment vary across bands.}
\label{fig:quiver_plots_grad}
\end{figure}

\newpage
\section{Additional Ablations}
\label{app:ablations}

\begin{table}[t]
\centering
\caption{Consolidated ablations at 217M on C4, all under the protocol of
\S\ref{sec:exp_setup} and reported on the shared base seed for
comparability; $\Delta$ is the loss difference to default RoPE (negative is
better). Rows marked $\dagger$ have three matched seeds
(\autoref{tab:seeds}). Differences below ${\sim}0.002$ nats are within
seed variability, so orderings among the three LeRoPE parametrizations, and
between $p \in \{0.5, 0.75\}$, are not meaningful. Fixed LeRoPE trains with
frequencies frozen to those of an independent LeRoPE run at the same scale
(cf. \autoref{tab:ppl_warmstart}).}
\label{tab:ablations}
\begin{tabular}{lcc}
\toprule
Variant & Val loss (nats) & $\Delta$ vs RoPE \\
\midrule
RoPE, $b = 2048$$^\dagger$                & 2.9107          & $+0.0073$ \\
RoPE, $b = 10{,}000$ (default)$^\dagger$  & 2.9035          & ---       \\
RoPE, $b = 500{,}000$$^\dagger$           & 2.8996          & $-0.0039$ \\
\midrule
$p$-RoPE, $p = 0.25$                      & 2.9208          & $+0.0173$ \\
$p$-RoPE, $p = 0.5$                       & 2.9029          & $-0.0006$ \\ %
$p$-RoPE, $p = 0.75$ (baseline)           & 2.9029          & $-0.0006$ \\
\midrule
LeRoPE (log scalar, shared)$^\dagger$     & \textbf{2.8978} & $\mathbf{-0.0057}$ \\
\quad linear scalar                       & 2.8978          & $-0.0057$ \\
\quad direct frequencies                  & 2.8980          & $-0.0055$ \\
\quad per layer and head                  & 2.8996          & $-0.0039$ \\
\quad frozen transferred freqs.\ (Fixed LeRoPE) & 2.8999    & $-0.0036$ \\
\bottomrule
\end{tabular}
\end{table}

In this section we include additional ablations that we ran during the process of testing LeRoPE.

\textbf{Parametrization.} LeRoPE learns the per-band scale in log space,
$\hat\theta_m = e^{\alpha_m}\theta_m$. We compare against a linear scalar,
$\hat\theta_m = \alpha_m \theta_m$, and against directly learning each frequency, $\hat\theta_m = \alpha_m$. Both recover RoPE at initialization. 

All three reach the same loss to within $0.0003$ nats (\autoref{tab:ablations}), within seed variability, suggesting the choice is not critical with Adam's per-parameter step normalization. However, we observed very unstable trajectories for the directly learned frequencies and aliasing around 0 for the linear scalars. We retain the log parametrization because it removes these issues.

\textbf{Finer Learned Granularities.} LeRoPE shares one set of scales across all layers and heads. Learning a separate set per layer and head adds parameters but no benefit. It reached $2.8996$ nats, $0.0018$ behind shared LeRoPE which is within one seed standard deviation (Table~\ref{tab:ablations}).

\textbf{Change of RoPE Base.} Prior work finds $b = 10{,}000$ can be suboptimal, with gains reported from both raising and lowering the base~\citep{Liu2023ScalingLO}. We train RoPE with $b = L_{\text{train}} = 2048$, following FMRoPE~\citep{oka2026frequency}, and $b = 500{,}000$, the value adopted for long-context models~\citep{xiong2023effectivelongcontextscalingfoundation} (\autoref{tab:seeds}). Matching the base to the training length degrades in-window loss on every seed ($+0.0069$ nats), consistent with the in-window side of the interpolation--extrapolation trade-off; raising it to $500{,}000$ improves loss on every seed ($-0.0038$ nats), capturing roughly $60\%$ of LeRoPE's gain. LeRoPE outperforms every tested base under every seed (\autoref{tab:seeds}).

\textbf{$p$-RoPE fraction.} Our baseline uses $p = 0.75$, the best-performing value in \citet{Barbero2024RoundAR}. We swept $p \in \{ 0.25, 0.5, 0.75\}$ at 217M. $p = 0.5$ matches $p = 0.75$ (both within seed variability of RoPE).
$p = 0.25$ leaves no band rotating with a period beyond ${\sim}50$ tokens and degrades substantially ($+0.0173$ nats, ${\sim}9\times$ seed variability). Notably, LeRoPE's learned suppression point at $L_{\text{train}} = 2048$ sits near the same boundary (band ${\sim}17$ of 32, an effective $p \approx 0.5$), while its gain comes from reshaping the bands it retains, which no choice of $p$ can express.

\begin{table}[t]
\centering
\setlength{\tabcolsep}{5pt}
\caption{C4 validation loss (nats) across three matched random seeds; seed
42 is the run reported in \autoref{tab:ppl}. $\bar\Delta$ is each
baseline's mean paired margin behind LeRoPE, which is positive under every
individual seed ($s_d = 0.0019$ vs.\ RoPE and $0.0008$ vs.\ $b{=}500$k at
217M; $0.0010$ vs.\ RoPE at 608M). Seed effects are strongly correlated
across methods trained on the same seed, so paired margins vary
substantially less than the marginal deviations suggest. Per-method seed
variability decreases with scale, and at 608M the two methods' ranges do
not overlap.}
\label{tab:seeds}
\begin{tabular}{lccccc}
\toprule
 & Seed 42 & Seed 43 & Seed 44 & mean $\pm$ std & $\bar\Delta$ \\
\midrule
\multicolumn{6}{l}{\emph{217M}} \\
RoPE ($b{=}2048$)    & 2.9107 & 2.9100 & 2.9137 & $2.9115{\pm}0.0020$ & 0.0134 \\
RoPE ($b{=}10$k)     & 2.9035 & 2.9050 & 2.9053 & $2.9046{\pm}0.0010$ & 0.0066 \\
RoPE ($b{=}500$k)    & 2.8996 & 2.8997 & 2.9031 & $2.9008{\pm}0.0020$ & 0.0027 \\
LeRoPE               & \textbf{2.8978} & \textbf{2.8963} & \textbf{2.9001}
                     & $\mathbf{2.8981{\pm}0.0019}$ & --- \\
\midrule
\multicolumn{6}{l}{\emph{608M}} \\
RoPE ($b{=}10$k)     & 2.6304 & 2.6290 & 2.6288 & $2.6294{\pm}0.0009$ & 0.0040 \\
LeRoPE               & \textbf{2.6253} & \textbf{2.6250} & \textbf{2.6257}
                     & $\mathbf{2.6254{\pm}0.0004}$ & --- \\
\bottomrule
\end{tabular}
\end{table}

\renewcommand{\sectionautorefname}{Appendix}
\renewcommand{\subsectionautorefname}{Appendix}

\newpage

\newpage
\section{Extrapolation and Dominant Band Wavelengths}
\label{app:freq_plots}

\subsection{Dominant Positional Band Wavelength}

\begin{table}[ht]
\centering
\caption{Dominant-band indices and learned wavelengths for different models and training context lengths.}
\label{tab:dom_band_stats}
\begin{tabular}{lcccccc}
\toprule
Scale (params) & $L$ & Dominant Band & $e^\alpha$ & $\hat{\theta}^{(m)}$ & $\lambda_{dominant}$ (tok) & $\hat{\theta}^{(m)} L$ \\
\midrule
217M & 2048 & 16 & 0.1481 & 0.001481 & 4243.3 & 3.033 \\
217M  & 4096 & 18 & 0.1178 & 0.000663 & 9481.7 & 2.714 \\
217M  & 8192 & 19 & 0.0823 & 0.000347 & 18097.8 & 2.844 \\
608M & 2048 & 17 & 0.1945 & 0.001459 & 4307.8 & 2.987 \\
1.34B & 2048 & 17 & 0.1955 & 0.001466 & 4285.6 & 3.003 \\
2.52B & 2048 & 17 & 0.2009 & 0.001507 & 4170.3 & 3.086 \\
\bottomrule
\end{tabular}
\end{table}

\paragraph{Leave-one-out band ablation.}
We zero one learned frequency band at a time and measure the increase in
evaluation loss relative to the baseline. Zeroing band 17 increases loss by
$0.762$ nats. The next-largest loss increase comes from zeroing band 26,
which increases loss by only $0.069$ nats.

\begin{figure}[h]
  \centering
  \includegraphics[width=0.95\linewidth]{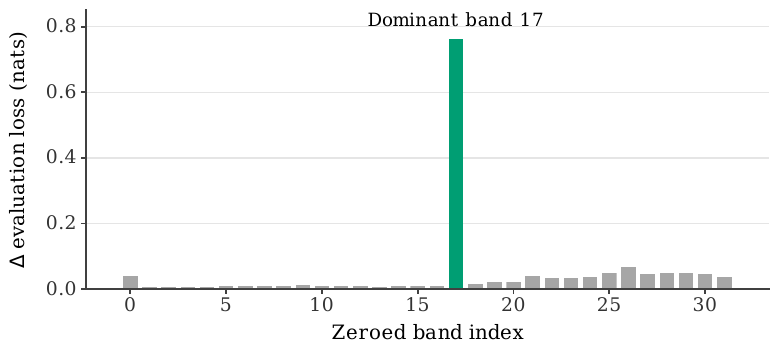}
  \caption{Leave-one-out importance of learned frequency bands. Each bar shows
  the loss increase from zeroing the indicated band, relative to a baseline
  loss of $2.309$.
  Zeroing out Band 17 increases evaluation loss by $0.762$ nats, far exceeding others.}
  \label{fig:leave_one_out_bands}
\end{figure}

\subsection{Extrapolation Details}
\label{aapp:extrap_extra}

\paragraph{NTK-by-parts.} NTK-by-parts~\citep{Peng2023YaRNEC} sets each
band's interpolation from its number of rotations during training. For band
$m$ with frequency $\varphi_m$, let $r_m = L_{\text{train}}\varphi_m / 2\pi$
be its rotation count during training. In LeRoPE, $\varphi_m=\hat\theta_m$.
Let $s = \min(1, L_{\text{train}}/L_{\text{doc}})$ be the per-document
position interpolation factor. We multiply each frequency by
$M_m = s + w_m(1-s)$, where $w_m = \mathrm{clip}\big((r_m-r_{\text{lo}})/
(r_{\text{hi}}-r_{\text{lo}}),\,0,\,1\big)$, with $r_{\text{lo}}=1$ and
$r_{\text{hi}}=32$~\citep{Peng2023YaRNEC}. Bands with fewer than $r_{\text{lo}}$
rotations are scaled by $s$, while bands with more than $r_{\text{hi}}$
rotations are unchanged. Between these thresholds, the multiplier increases
linearly. We apply interpolation per document rather than at a fixed target
length and compute rotation counts from each model's own frequencies.
LeRoPE's dominant band, slowed to ${\sim}0.5$ rotations per window, is always fully
interpolated.

\paragraph{YaRN.} YaRN~\citep{Peng2023YaRNEC} adds an attention temperature
to NTK-by-parts to keep attention from flattening at longer contexts. With
per-document extension factor $s_{\text{ext}} = \max(1, L_{\text{doc}}/L_{\text{train}})$,
attention logits are multiplied by $(c \ln s_{\text{ext}} + 1)^2$ before the
softmax.
We sweep YaRN's temperature coefficient $c$ for each method and use $c=0.1$,
matching \citet{Peng2023YaRNEC}. Both mechanisms operate at inference only;
no parameters are updated.

\begin{table}[h]
  \centering
  \caption{Average perplexity for 2.52B models under NTK-by-parts + YaRN as well as LeRoPE with only interpolating the dominant band and extrapolating the rest. Split by in-distribution ($<L_{\text{train}}$) and extrapolated ($\geq L_{\text{train}}$) positions over the 0--4095 window. LeRoPE is lowest in every column.}
  \label{tab:extrap_ppl}
  \begin{tabular}{lccc}
    \toprule
    Method & In-dist & Extrap & Full \\
    \midrule
    RoPE     & 10.138 & 8.861 & 10.042 \\
    $p$-RoPE & 10.133 & 8.838 & 10.035 \\
    \midrule  
    LeRoPE \textit{(dom. band only)} & 10.112 & 8.965 & 10.026 \\
    LeRoPE   & \textbf{10.106} & \textbf{8.776} & \textbf{10.005} \\
    \bottomrule
  \end{tabular}
\end{table}
\FloatBarrier

To complement these sequence-level averages, \autoref{fig:extrap_yarn_zoom}
shows per-token loss over the full evaluation range and a zoomed view of the
extrapolation region.

\begin{figure}[h]
  \centering
  \includegraphics[width=0.98\linewidth]{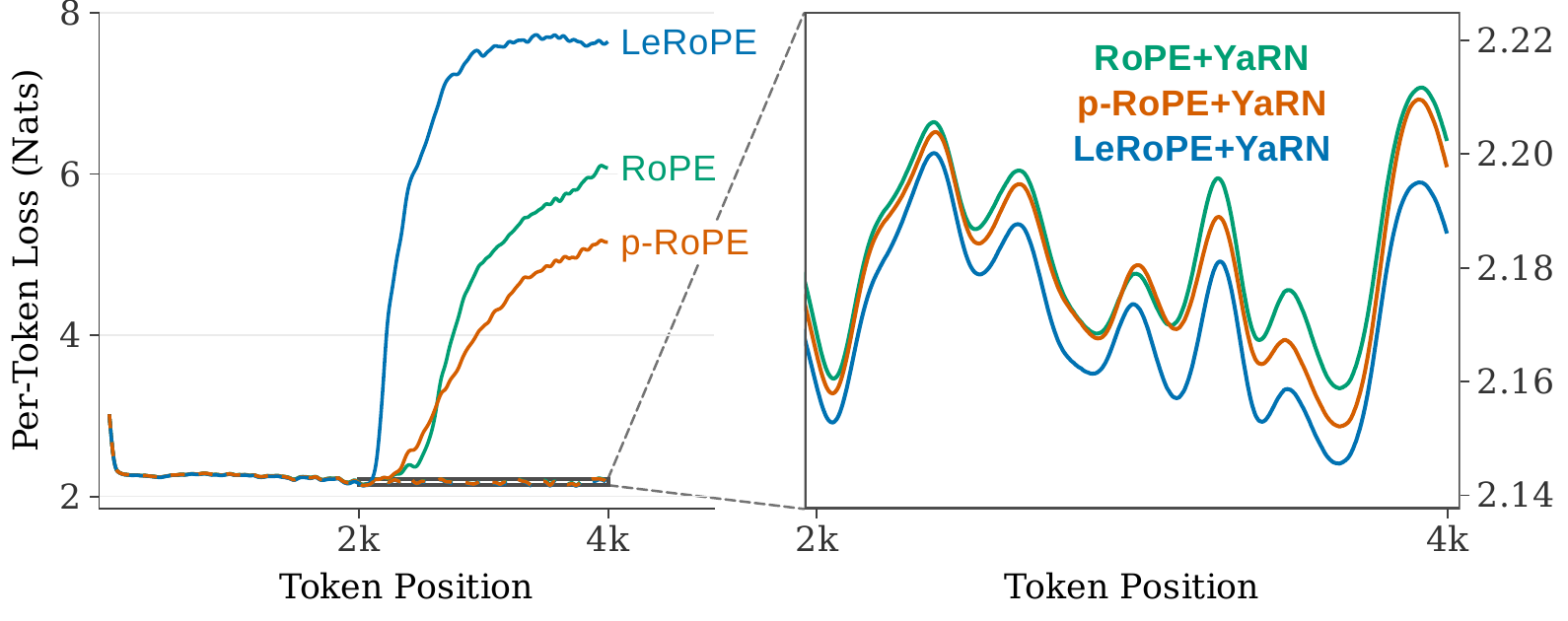}
  \caption{Per-token loss under naive extrapolation (solid) and NTK-by-parts + YaRN (dashed), evaluated up to $2\times L_{\text{train}}$. Left: Full-range results smoothed with a Gaussian kernel, $\sigma = 20$. Right: YaRN results over the extrapolation region smoothed with $\sigma = 60$.}
  \label{fig:extrap_yarn_zoom}
\end{figure}

\newpage
\section{Code Pretraining}
\label{app:code}

We evaluate LeRoPE on the Python split of the StarCoder data~\citep{li2023starcoder} to test whether its gains and frequency patterns in Section~\ref{sec:learned_freqs} are specific to C4 or generalize to code. We pre-train 217M-parameter models with the same setup as Section~\ref{sec:exp_setup}, using a 32{,}000-token SentencePiece tokenizer trained on code to match vocabulary size and parameter count.

\paragraph{Language modeling.} At 217M, LeRoPE improves over RoPE by $0.0104$ nats (\autoref{tab:code}). By contrast, $p$-RoPE improves by $0.0006$ nats, The StarCoder-Python runs use a single seed. If we assume run-to-run variability is comparable to C4 (\autoref{tab:seeds}), LeRoPE's $0.0104$-nat loss reduction exceeds that variability several times over.

\begin{table}[h]
\centering
\caption{Validation loss on StarCoder-Python at 217M (base seed), under the
training recipe of Section~\ref{sec:exp_setup}.}
\label{tab:code}
\begin{tabular}{lcc}
\toprule
Method & Val loss (nats) & $\Delta$ vs RoPE \\
\midrule
RoPE     & 1.2501 & --- \\
$p$-RoPE & 1.2495 & -0.0006 \\
LeRoPE   & \textbf{1.2398} & -0.0104 \\
\bottomrule
\end{tabular}
\end{table}

\paragraph{Learned frequencies.} We analyze 217M-parameter code models
trained at $L_{\text{train}} \in \{2048, 4096, 8192\}$ following the
procedure in Section~\ref{sec:learned_freqs}. On StarCoder, fast bands remain
close to their fixed-RoPE frequencies while slow bands learn wavelengths
substantially longer than their fixed-RoPE values, as on C4
(\autoref{fig:rope_lerope_parametric}c). As training length increases, learned frequencies deviate from
fixed RoPE at lower frequencies. The dominant wavelength also
increases, from $5{,}250$ tokens at $L_{\text{train}}=2048$ to $16{,}627$
tokens at $L_{\text{train}}=8192$. At $L_{\text{train}}=2048$, a single
positional band dominates (\autoref{fig:code_frequency_analysis}b) with wavelength
$\lambda \approx 5{,}250$ tokens.

\begin{figure}[h]
  \centering
  \begin{subfigure}[b]{0.35\linewidth}
    \centering
    \includegraphics[width=\linewidth]{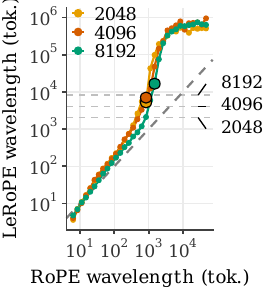}
  \end{subfigure}
  \hfill
  \begin{subfigure}[b]{0.61\linewidth}
    \centering
    \includegraphics[width=\linewidth]{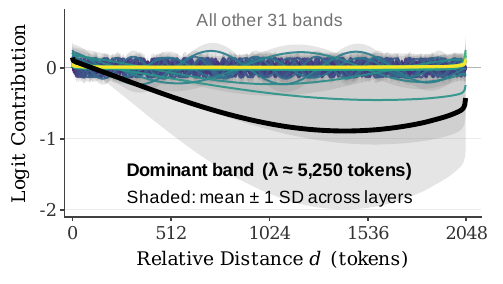}
  \end{subfigure}
  \caption{\textbf{Left}: Learned versus fixed-RoPE wavelength for each band in
  217M LeRoPE models trained on StarCoder-Python at
  $L_{\text{train}} \in \{2048,4096,8192\}$. The diagonal dashed line marks
  $y=x$, horizontal dashed lines mark the training lengths, and outlined points
  mark the dominant bands. \textbf{Right}: Band-level logit contributions as a
  function of raw token offset $d=s-t$ for the $L_{\text{train}}=2048$ model.
  Shading shows $\pm 1$ standard deviation across layers.}
  \label{fig:code_frequency_analysis}
\end{figure}

\end{document}